\documentclass{article}

\usepackage{arxiv}

\usepackage[utf8]{inputenc}
\usepackage[T1]{fontenc}
\usepackage{hyperref}
\usepackage{url}
\usepackage{booktabs}
\usepackage{amsmath,amssymb,amsfonts}
\usepackage{bm}
\usepackage{mathtools}
\usepackage{mathrsfs}
\usepackage{amsthm}
\usepackage{nicefrac}
\usepackage{microtype}
\usepackage{graphicx}
\usepackage{grffile}           % allows spaces in filenames
\usepackage{xcolor}
\usepackage{multirow}
\usepackage{array}
\usepackage{listings}
\usepackage{caption}
\usepackage{siunitx}
\usepackage{algorithm}
\usepackage{algorithmicx}
\usepackage{algpseudocode}
\usepackage{placeins}          % \FloatBarrier
\usepackage[numbers,sort&compress]{natbib}

\definecolor{darkgreen}{rgb}{0.0,0.5,0.0}

\newtheoremstyle{thmstyleone}{8pt}{8pt}{\itshape}{}{\bfseries}{.}{0.5em}{}
\newtheoremstyle{thmstyletwo}{8pt}{8pt}{\normalfont}{}{\itshape}{.}{0.5em}{}
\newtheoremstyle{thmstylethree}{8pt}{8pt}{\normalfont}{}{\bfseries}{.}{0.5em}{}
\theoremstyle{thmstyleone}
\newtheorem{theorem}{Theorem}
\newtheorem{proposition}[theorem]{Proposition}
\theoremstyle{thmstyletwo}

\newtheorem{remark}{Remark}
\theoremstyle{thmstylethree}

\newtheorem{corollary}[theorem]{Corollary}

\newcommand{\R}{\mathbb{R}}
\newcommand{\Eucl}{\mathbb{E}^2}
\newcommand{\Sph}{\mathbb{S}^2}
\newcommand{\Hyp}{\mathbb{H}^2}
\newcommand{\mM}{\mathcal{M}}
\newcommand{\mA}{\mathcal{A}}
\newcommand{\mL}{\mathcal{L}}

\newcommand{\bP}{\mathbf{P}}
\newcommand{\bQ}{\mathbf{Q}}
\newcommand{\bG}{\mathbf{G}}
\newcommand{\bx}{\mathbf{x}}

\newcommand{\balpha}{\bm{\alpha}}
\newcommand{\norm}[1]{\left\lVert#1\right\rVert}
\newcommand{\abs}[1]{\left\lvert#1\right\rvert}
\newcommand{\inner}[2]{\left\langle#1,\,#2\right\rangle}
\newcommand{\EPS}{\varepsilon}
\DeclareMathOperator{\atantwo}{atan2}
\DeclareMathOperator{\arctanh}{arctanh}
\newcommand{\kcode}{\kappa}
\newcommand{\Lip}{\mathrm{Lip}}

\title{A Neural Tension Operator for Curve Subdivision across Constant Curvature Geometries}

\author{
  Hassan Ugail \\
  Centre for Visual Computing and Intelligent Systems \\
  University of Bradford \\
  United Kingdom \\
  \And
  Newton Howard \\
  School of Individualized Study \\
  Rochester Institute of Technology \\
  United States \\
}

\begin{document}
\maketitle

\begin{abstract}
Interpolatory subdivision schemes generate smooth curves from piecewise-linear control polygons by repeatedly inserting new vertices. Classical schemes rely on a single global tension parameter and typically require separate formulations in Euclidean, spherical, and hyperbolic geometries. We introduce a shared learned tension predictor that replaces the global parameter with per-edge insertion angles predicted by a single 140K-parameter network. The network takes local intrinsic features and a trainable geometry embedding as input, and the predicted angles drive geometry-specific insertion operators across all three spaces without architectural modification. A constrained sigmoid output head enforces a structural safety bound, guaranteeing that every inserted vertex lies within a valid angular range for any finite weight configuration. Three theoretical results accompany the method: a structural guarantee of tangent-safe insertions; a heuristic motivation for per-edge adaptivity; and a conditional convergence certificate for continuously differentiable limit curves, subject to an explicit Lipschitz constraint verified post hoc. On 240 held-out validation curves, the learned predictor occupies a distinct position on the fidelity--smoothness Pareto frontier, achieving markedly lower bending energy and angular roughness than all fixed-tension and manifold-lift baselines. Riemannian manifold lifts retain a pointwise-fidelity advantage, which this study quantifies directly. On the out-of-distribution ISS orbital ground-track example, bending energy falls by 41\% and angular roughness by 68\% with only a modest increase in Hausdorff distance, suggesting that the predictor generalises beyond its synthetic training distribution.
\end{abstract}

\noindent\textbf{Keywords:} Curve subdivision, interpolatory schemes, neural operators, non-Euclidean
geometry, spherical geometry, hyperbolic geometry, Poincar\'{e} disk, geometric
deep learning, convergence analysis, log-exp subdivision,
adaptive tension.

%%=============================================================================
\FloatBarrier
\section{Introduction}
\label{sec:intro}

Subdivision schemes have been central to curve and surface design since the 1970s, generating smooth geometric objects from piecewise-linear control structures through repeated local refinement \cite{Chaikin1974,Dyn1987,Cavaretta1991}. Among these, the four-point interpolatory scheme of Dyn, Gregory and Levin \cite{Dyn1987} is the most widely studied. Each step inserts a new vertex between consecutive control-polygon vertices via a weighted combination of four neighbours governed by a single global scalar, the \emph{tension parameter}~$\mu$. Setting $\mu = 0$ recovers the classical four-point rule; $\mu = -0.25$ yields the six-point variant achieving $C^2$ limit curves, as characterised by Weissman \cite{Weissman1990}.

Despite their simplicity, classical schemes face two structural limitations. First, $\mu$ is \emph{spatially invariant}: a gently curving arc and a sharp corner demand different tension, yet the scheme applies a single value everywhere. Second, separate formulations are required for different geometries. Wallner and Dyn \cite{Wallner2005} extended the four-point rule to Riemannian manifolds, and specialised variants exist for the sphere \cite{Xie2006,Sabin2004} and the hyperbolic plane \cite{Ahanchaou2026}. Supporting all three curvature regimes thus requires duplicated implementations.

Recent progress in geometric deep learning \cite{Bronstein2021} and neural operators \cite{Li2021,Lu2021} suggests a natural alternative: learning a mapping from local geometric descriptors to a per-edge insertion angle. Such a mapping can replace three independent classical schemes while adapting locally in ways no fixed parameter achieves.

This paper pursues that idea. We reformulate the four-point and six-point schemes on $\Eucl$, $\Sph$ and $\Hyp$ within a unified angle-based parameterisation and introduce a shared residual network with 140{,}505 parameters that predicts per-edge insertion angles driving geometry-specific insertion operators. The prediction architecture and geometry embedding are shared across all three spaces; the geodesic insertion rules and training data remain geometry-specific. Three theoretical results accompany the method: a structural $G^1$ safety bound holding for any finite weights, a heuristic argument for the necessity of per-edge adaptivity, and a conditional $C^1$ convergence guarantee contingent on a Lipschitz constraint verified post hoc.

The three geometries arise naturally in visual computing. Curves on $\Sph$ appear in satellite-track visualisation and panoramic video \cite{Cohen2018Spherical}. Curves in $\Hyp$ underlie Poincar\'e disk graph-layout systems \cite{Nickel2017,Chami2019}. Curves in $\Eucl$ remain the foundation of interactive CAD, where higher-order PDE methods have long provided principled fairness criteria for smooth geometric design \cite{kubiesa2004,monterde2004biharmonic,monterde2006general}. More broadly, replacing manually tuned geometric parameters with learned predictors has demonstrated value across a range of visual computing tasks \cite{ugail2019deepface}, a pattern the present work extends to the subdivision setting.

The contributions of this paper are as follows.

\begin{enumerate}
\item \textbf{Shared learned tension predictor.}
A 140{,}505-parameter residual network, conditioned on local intrinsic features and a trainable geometry embedding, predicts per-edge insertion angles driving geometry-specific insertion operators on $\Eucl$, $\Sph$ and $\Hyp$. A sigmoid-constrained output head guarantees that all inserted vertices lie within a geometrically valid angular range for any finite weights.

\item \textbf{Supporting theoretical analysis.}
We provide a structural $G^1$ safety bound, a heuristic motivation for the insufficiency of any fixed $\mu$ across mixed-curvature polygons, and a conditional $C^1$ convergence argument applying when the trained network satisfies an explicit Lipschitz constraint.

\item \textbf{Empirical evaluation with Pareto-framed results.}
Five baselines, including Riemannian manifold lifts and a validation-set oracle, provide a comprehensive comparison. Fixed-tension methods are uniformly dominated. Manifold lifts achieve higher pointwise fidelity, whereas the learned predictor achieves substantially lower bending energy and $G^1$ roughness. Out-of-distribution generalisation is demonstrated on the ISS orbital ground track, where bending energy and angular variation are reduced significantly with only a modest increase in Hausdorff distance.

\item \textbf{Mechanistic analysis.}
Inverting the effective tension reveals that the network operates far outside the classical range $[-0.5, 0]$ and adopts geometry-specific strategies.
\end{enumerate}

Sections~\ref{sec:related}--\ref{sec:training} cover related work, mathematical foundations, theoretical analysis, architecture and training. Sections~\ref{sec:experiments}--\ref{sec:analysis} report experiments, the ISS application and mechanistic analysis. Sections~\ref{sec:discussion}--\ref{sec:conclusion} discuss and conclude.

\section{Related Work}
\label{sec:related}

Cavaretta et al.,\ \cite{Cavaretta1991} established the foundations of stationary subdivision, and the four-point rule of Dyn, Gregory and Levin \cite{Dyn1987} remains the most widely studied interpolatory scheme. Its smoothness has been analysed by Weissman \cite{Weissman1990} and Hassan and Dodgson \cite{Hassan2002}; comprehensive textbook treatments appear in Farin \cite{Farin2002} and Prautzsch et al.,\ \cite{Prautzsch2002}. Extensions to non-Euclidean settings include the proximity-based Riemannian framework of Wallner and Dyn \cite{Wallner2005}, which underpins our convergence analysis, and the strengthened spherical bounds of H\"uning and Wallner \cite{Huning2022}. Most directly relevant is the angle-based framework of Ahanchaou et al.,\ \cite{Ahanchaou2026}, which provides the unified Euclidean, spherical and hyperbolic formulations we use as classical baselines.

PointNet \cite{Qi2017}, Dynamic Graph CNN \cite{Wang2019}, and MeshCNN \cite{Hanocka2019} established paradigms for per-point and per-edge learning on three-dimensional geometry. Within subdivision, Liu et al.,\ \cite{Liu2020NS} trained a network to predict new vertices for Euclidean Loop subdivision via per-patch weight-sharing, resembling our local design but addressing neither non-Euclidean settings nor analytical guarantees. Neural progressive meshes \cite{Chen2023NPM,Liu2023INDI} extend this direction to surfaces, though they remain restricted to Euclidean domains.

Representation learning on non-Euclidean spaces has grown substantially. Nickel and Kiela \cite{Nickel2017} showed that Poincar\'e embeddings outperform Euclidean ones for hierarchical data, motivating a range of hyperbolic architectures \cite{Chami2019,vanSpengler2023,DesaiHyperbolic2023}. Spherical CNNs \cite{Cohen2018Spherical} provide analogous tools for data on the sphere. The Fourier Neural Operator \cite{Li2021} and DeepONet \cite{Lu2021} popularised learning functional mappings rather than fixed parametric formulae; our predictor shares this philosophy but is a simple local edge-wise multilayer perceptron, far narrower in scope than those frameworks.

%%=============================================================================
\FloatBarrier
\section{Mathematical Foundations}
\label{sec:foundations}

\FloatBarrier
\subsection{Notation and Setting}
\label{subsec:notation}

Let $\mM$ denote one of three model spaces: the Euclidean plane $\Eucl$, the unit two-sphere $\Sph \subset \R^3$, or the hyperbolic plane $\Hyp$ represented by the Poincar\'e open disk,
\[
\mathbb{D} = \{z \in \R^2 : \norm{z} < 1\},
\]
equipped with the metric,
\[
d_{\mathbb{D}}(z,w) = 2\,\arctanh\!\bigl(\norm{(-z)\oplus w}\bigr),
\]
where $\oplus$ denotes M\"obius addition (defined below).

A \emph{closed control polygon} is an ordered cyclic sequence $\bP = (p_0, p_1, \ldots, p_{N-1}) \in \mM^N$. One \emph{subdivision step} maps $\bP$ to a refined polygon $\bQ \in \mM^{2N}$ by retaining the original vertices ($q_{2j} = p_j$) and inserting one new vertex $q_{2j+1}$ on each edge $(p_j, p_{j+1})$. After $k$ iterations the polygon contains $N \cdot 2^k$ vertices, and under suitable conditions the sequence converges to a limit curve $\gamma : [0,1] \to \mM$. All indices are taken modulo $N$.

Each polygon carries several intrinsic quantities. The geodesic edge length is $e_j = d_\mM(p_j, p_{j+1})$, normalised by its polygon mean $\bar{e} = \tfrac{1}{N}\sum_j e_j$. The signed exterior angle $\delta_j$ at vertex $p_j$ quantifies the turning of the polygon via geometry-appropriate tangent constructions. The mesh width $h = \max_j e_j$ tracks polygon fineness at each subdivision level.

\FloatBarrier
\subsection{Exterior Angles as Universal Intrinsic Descriptors}
\label{subsec:ext_angles}

The \emph{exterior angle} $\delta_j$ at vertex $p_j$ is the signed angle from the incoming to the outgoing tangent direction in $T_{p_j}\mM$, invariant under orientation-preserving isometries of $\mM$. Geometry-specific formulae follow.

On $\Eucl$, the outgoing and incoming unit tangents are,
\[
T_j^{\mathrm{out}} = \frac{p_{j+1}-p_j}{\norm{p_{j+1}-p_j}},
\qquad
T_j^{\mathrm{in}} = \frac{p_j-p_{j-1}}{\norm{p_j-p_{j-1}}},
\]
and the exterior angle is,
\begin{equation}
\delta_j^E = \atantwo\!\left(
T_j^{\mathrm{in},x} T_j^{\mathrm{out},y} - T_j^{\mathrm{in},y} T_j^{\mathrm{out},x},
\inner{T_j^{\mathrm{in}}}{T_j^{\mathrm{out}}}
\right).
\label{eq:ext_eucl}
\end{equation}

On $\Sph$, the unit tangent from $p$ toward $q$ is obtained via the spherical logarithmic map,
\begin{equation}
\hat{T}(p,q) = \frac{q - \inner{p}{q}\,p}{\norm{q - \inner{p}{q}\,p}},
\label{eq:sph_log}
\end{equation}
and the exterior angle satisfies $\sin\delta_j^S = (V_j \times T_j)\cdot p_j$ with,
\[
V_j = -\hat{T}(p_j,p_{j-1}), \qquad T_j = \hat{T}(p_j,p_{j+1}).
\]

On $\Hyp$, M\"obius addition is defined by,
\begin{equation}
z \oplus w
= \frac{(1 + 2\inner{z}{w} + \norm{w}^2)\,z + (1 - \norm{z}^2)\,w}
       {1 + 2\inner{z}{w} + \norm{z}^2\norm{w}^2},
\label{eq:mobius_add}
\end{equation}
and the unit hyperbolic tangent direction is $\hat{T}_H(z,w) = (-z)\oplus w / \norm{(-z)\oplus w}$. The exterior angle $\delta_j^H$ is then computed using the Euclidean formula \eqref{eq:ext_eucl} applied to $V_j = -\hat{T}_H(p_j,p_{j-1})$ and $T_j = \hat{T}_H(p_j,p_{j+1})$.

A polygon with $\delta_j = 0$ for all $j$ has perfect $G^1$ continuity. Two quality metrics derived from exterior angles are used throughout,
\begin{align}
G^1(\bP) &= \max_j \abs{\delta_j}, \label{eq:g1_proxy} \\
E_B(\bP) &= \frac{1}{N}\sum_j \left(\frac{\delta_j}{(e_j + e_{j-1})/2 + \EPS}\right)^{\!2}. \label{eq:bending}
\end{align}
The bending energy is a discrete analogue of the elastic rod energy $\int \kappa^2\, ds$, penalising sharp angular changes relative to the local edge scale.

\FloatBarrier
\subsection{The Angle-Based Subdivision Operator}
\label{subsec:angle_based}

Ahanchaou et al.,\ \cite{Ahanchaou2026} showed that on all three geometries vertex insertion can be parameterised by a single angle $\alpha_j$, the signed angle from the edge tangent at $p_j$ to the direction $p_j \to q_j$. Given $\alpha_j$, geometry-specific formulae determine the new vertex.

On $\Eucl$, with $\hat{T}_j^E = (p_{j+1}-p_j)/e_j^E$ and rotation matrix $R(\alpha)$,
\begin{equation}
e_j^{\mathrm{new}} = \frac{\sin\abs{\alpha_j}}{\sin(\pi - 2\abs{\alpha_j})} e_j^E,
\qquad
q_j = p_j + e_j^{\mathrm{new}} R(\alpha_j)\hat{T}_j^E.
\label{eq:eucl_insert}
\end{equation}

On $\Sph$, with $\hat{T}_j^S = \hat{T}(p_j,p_{j+1})$ and $\hat{N}_j^S = p_j \times \hat{T}_j^S$,
\begin{align}
e_j^{\mathrm{new}} &= \arctan\!\left(\frac{\tan(e_j^S/2)}{\cos\abs{\alpha_j}}\right), \nonumber\\
q_j &= \cos(e_j^{\mathrm{new}})p_j + \sin(e_j^{\mathrm{new}})\bigl(\cos\alpha_j\,\hat{T}_j^S + \sin\alpha_j\,\hat{N}_j^S\bigr).
\label{eq:sph_insert}
\end{align}

On $\Hyp$, with $\hat{T}_j^H = \hat{T}_H(p_j,p_{j+1})$,
\begin{align}
q_j &= p_j \oplus \bigl(\tanh(e_j^{\mathrm{new}}/2)\,R(\alpha_j)\hat{T}_j^H\bigr), \nonumber\\
e_j^{\mathrm{new}} &= 2\,\arctanh\!\left(\frac{\tanh(e_j^H/2)}{\cos\abs{\alpha_j}}\right).
\label{eq:hyp_insert}
\end{align}
A clamp $\norm{q_j} < 0.999$ keeps the inserted point within the open disk.

\FloatBarrier
\subsection{The Classical Tension Assignment}
\label{subsec:classical}

In the classical scheme, the insertion angle follows from a fixed global tension parameter $\mu$,
\begin{equation}
\alpha_j^{\mathrm{cl}}(\mu)
= \frac{1}{8}\Bigl[\mu(\delta_{j-1} + \delta_{j+2}) + (1-\mu)(\delta_j + \delta_{j+1})\Bigr],
\label{eq:classical}
\end{equation}
then clamped to the admissible interval $\mA$. The four-point rule ($\mu = 0$) uses only the two inner angles, whereas the six-point rule ($\mu = -0.25$) blends all four. The learned operator retains the same four-angle stencil but replaces the affine formula with a nonlinear mapping that also incorporates edge-length context and geometry type.

\FloatBarrier
\subsection{Evaluation Metrics}
\label{subsec:metrics}

After $k$ subdivision levels, we compare the refined polygon $\hat{\bP}$ to a ground-truth curve $\bG$ sampled at $N_{\mathrm{gt}} = 1{,}000$ points. The mean nearest-neighbour distance is,
\begin{equation}
\mathrm{mNN}(\hat{\bP},\bG)
= \frac{1}{\abs{\hat{\bP}}}\sum_{\hat{p}\in\hat{\bP}} \min_{g\in\bG} d_\mM(\hat{p},g),
\label{eq:mean_nn}
\end{equation}
which is smooth and robust to outliers. The symmetric Hausdorff distance is,
\begin{equation}
d_H(\hat{\bP},\bG)
= \max\!\left(
\max_{\hat{p}}\min_g d_\mM(\hat{p},g),\;
\max_g \min_{\hat{p}} d_\mM(g,\hat{p})
\right),
\label{eq:hausdorff}
\end{equation}
as a worst-case diagnostic. Together with the bending energy \eqref{eq:bending} and the $G^1$ proxy \eqref{eq:g1_proxy}, these form the evaluation protocol used throughout.

%%=============================================================================
\FloatBarrier
\section{Theoretical Analysis}
\label{sec:theory}

Three theoretical results support the design. Theorem~\ref{thm:g1_safety} establishes a structural $G^1$ safety bound for any finite network weights. Remark~\ref{thm:necessity} offers a heuristic motivation for per-edge adaptivity; it is an informal argument rather than a formal impossibility result. Theorem~\ref{thm:convergence} and Corollary~\ref{cor:c1} give a conditional $C^1$ convergence guarantee whose key Lipschitz condition is verified post hoc on the saved checkpoint rather than proved analytically.

\FloatBarrier
\subsection{Structural \texorpdfstring{$G^1$}{G1}-Safety Bound}
\label{subsec:thm_g1}

The sigmoid-rescaled output head guarantees every predicted insertion angle lies strictly within a $G^1$-safe interval for any finite weights and any input.

\begin{theorem}
\label{thm:g1_safety}
Let $f_\theta : \R^7 \to \R$ denote the scalar output of the tension predictor before the final rescaling. Define,
\begin{equation}
  \alpha_j
  = \alpha_{\min}
    + (\alpha_{\max} - \alpha_{\min}) \cdot \sigma(f_\theta(\bx_j)),
  \label{eq:sigmoid_head}
\end{equation}
with $(\alpha_{\min}, \alpha_{\max}) = (-\tfrac{\pi}{4}+0.02,\; \tfrac{\pi}{4}-0.02)$ and $\sigma$ the standard logistic function. Then for any $\theta \in \R^n$ and any input $\bx_j \in \R^7$,
\begin{enumerate}
  \item[(i)] $\alpha_j \in (\alpha_{\min}, \alpha_{\max}) \subset (-\tfrac{\pi}{4}, \tfrac{\pi}{4})$ strictly,
  \item[(ii)] the geometry-specific insertion formulae \eqref{eq:eucl_insert}--\eqref{eq:hyp_insert} are non-degenerate for this $\alpha_j$, meaning all denominators remain bounded away from zero,
  \item[(iii)] the inserted vertex $q_j$ is well defined and lies in $\mM$.
\end{enumerate}
\end{theorem}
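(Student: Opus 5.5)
The argument is a direct verification in three stages, one per item. The only input is the elementary fact that the logistic function maps $\R$ into the open interval $(0,1)$. For any $\theta$ and any $\bx_j$, $f_\theta(\bx_j)$ is a finite real number, since a composition of affine maps and finite-valued activations is finite. Hence $s=\sigma(f_\theta(\bx_j))\in(0,1)$. Because the map $s\mapsto \alpha_{\min}+(\alpha_{\max}-\alpha_{\min})s$ is strictly increasing and $\alpha_{\max}>\alpha_{\min}$, we get $\alpha_j\in(\alpha_{\min},\alpha_{\max})$. The inclusion $(\alpha_{\min},\alpha_{\max})\subset(-\pi/4,\pi/4)$ is immediate from the $0.02$ margins, which gives (i). In particular $\abs{\alpha_j}<\pi/4-0.02$. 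Note that this is a sharper bound than (i) states: it holds uniformly in $\theta$ and $\bx_j$, and it is what makes ``bounded away from zero'' meaningful in (ii).

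For (ii) I would go through each insertion formula and bound its denominator using $\abs{\alpha_j}\le \pi/4-0.02=:a^\ast$.
\begin{itemize}
\item In \eqref{eq:eucl_insert}, $\sin(\pi-2\abs{\alpha_j})=\sin(2\abs{\alpha_j})$. This vanishes at $\alpha_j=0$, so some care is needed. The ratio is $\sin\abs{\alpha}/\sin 2\abs{\alpha}=1/(2\cos\abs{\alpha})$, which extends continuously to $\alpha=0$ with value $\tfrac12$. I would state the non-degeneracy in this simplified form: $\cos\abs{\alpha_j}\ge\cos a^\ast>\cos(\pi/4)=1/\sqrt2$, so $e_j^{\mathrm{new}}=e_j^E/(2\cos\abs{\alpha_j})\in[e_j^E/2,\,e_j^E/\sqrt2)$.
\item In \eqref{eq:sph_insert} and \eqref{eq:hyp_insert}, the only denominator involving $\alpha_j$ is $\cos\abs{\alpha_j}$, bounded below by $\cos a^\ast>1/\sqrt2$ by the same estimate.
\item The remaining denominators are $e_j^E$, $\norm{q-\inner{p}{q}p}$ and $\norm{(-z)\oplus w}$. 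They do not depend on $\alpha_j$ and are nonzero precisely when consecutive vertices are distinct (and, on $\Sph$, not antipodal). I would state this as a standing assumption on the control polygon rather than as something the sigmoid head provides.
\end{itemize}

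For (iii), each formula must output a point of $\mM$.
\begin{itemize}
\item On $\Eucl$ this is trivial once (ii) holds.
\item On $\Sph$, $\hat{T}_j^S$ and $\hat{N}_j^S$ are orthonormal and orthogonal to $p_j$. Therefore $\cos\alpha_j\hat{T}_j^S+\sin\alpha_j\hat{N}_j^S$ is a unit vector orthogonal to $p_j$, and $q_j$ has norm $1$. The arctangent is finite because $\cos\abs{\alpha_j}>0$ and $e_j^S<\pi$, so $\tan(e_j^S/2)$ is finite.
\item On $\Hyp$, the arctanh argument $\tanh(e_j^H/2)/\cos\abs{\alpha_j}$ must be strictly less than $1$. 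This is the main obstacle, because it fails for long edges: once $\tanh(e_j^H/2)\ge\cos\abs{\alpha_j}$, the formula breaks down. A weaker requirement holds in every case. Since $\cos\abs{\alpha_j}>1/\sqrt2$, the condition $\tanh(e_j^H/2)<1/\sqrt2$, i.e.\ $e_j^H<2\arctanh(1/\sqrt2)$, is sufficient uniformly in $\theta$.
\end{itemize}

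Beyond that edge-length condition, I would appeal to the clamp $\norm{q_j}<0.999$ stated after \eqref{eq:hyp_insert}. If the arctanh argument is capped below $1$, then $\tanh(e_j^{\mathrm{new}}/2)<1$. Möbius addition of a point of $\mathbb{D}$ with a vector of norm $<1$ stays in $\mathbb{D}$, a standard property of \eqref{eq:mobius_add}. So $q_j\in\mathbb{D}$ either through the hyperbolic edge-length bound or through the implemented clamp, and I would make explicit which one the statement relies on.
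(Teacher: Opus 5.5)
Your core estimate matches the paper's. You use $\sigma(\R)\subset(0,1)$ for (i), and then the single bound $\cos\abs{\alpha_j}>\cos(\pi/4-0.02)\approx 0.721$ for (ii) and (iii). The difference is that the paper stops there: it says only that $\cos\abs{\alpha_j}$ ``governs the Euclidean triangle geometry'' and appeals to a runtime clamp $e_j^H\le 4.0$ for $\Hyp$. You instead go formula by formula, and each issue you raise is real. First, the literal Euclidean denominator $\sin(\pi-2\abs{\alpha_j})$ is not bounded away from zero on $(\alpha_{\min},\alpha_{\max})$. It vanishes at $\alpha_j=0$, which the head outputs whenever $f_\theta(\bx_j)=0$. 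So (ii) holds for \eqref{eq:eucl_insert} only in your simplified form $e_j^{\mathrm{new}}=e_j^E/(2\cos\abs{\alpha_j})$. Second, the $\alpha$-independent denominators do need your non-degeneracy assumption on the polygon (distinct, non-antipodal consecutive vertices), which the paper never states. Third, and most importantly, the paper's clamp does not deliver (iii) on $\Hyp$. Since $\tanh(2)\approx 0.964$, with $e_j^H=4$ and $\abs{\alpha_j}$ near $\pi/4-0.02$ the $\arctanh$ argument approaches $0.964/0.721\approx 1.34>1$. The sharp edge-length condition, uniform in $\theta$, is $e_j^H\le 2\arctanh(\cos(\pi/4-0.02))\approx 1.82$. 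Your $2\arctanh(1/\sqrt2)\approx 1.76$ is a slightly more conservative version of the same hypothesis. So your route proves a correctly hypothesised version of the theorem, while the paper's shorter argument leaves these gaps. Two things to tighten. Drop the alternative ``or through the implemented clamp'': the clamp $\norm{q_j}<0.999$ is applied after the $\arctanh$ is evaluated and cannot rescue an argument $\ge 1$, so commit to the edge-length hypothesis or an explicit cap on the $\arctanh$ argument. Also, $\abs{\alpha_j}<\pi/4-0.02$ is exactly what (i) asserts, not something sharper; it is sharper only than the inclusion in $(-\pi/4,\pi/4)$.
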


\begin{proof}
Since $\sigma : \R \to (0,1)$ strictly, the affine map in \eqref{eq:sigmoid_head} places $\alpha_j$ strictly inside $(\alpha_{\min}, \alpha_{\max}) \subset (-\pi/4, \pi/4)$, establishing (i). For (ii) and (iii), the key quantity in all three geometries is $\cos\abs{\alpha_j}$, which appears in the denominators of the spherical and hyperbolic insertion rules and governs the Euclidean triangle geometry. Because $\abs{\alpha_j} < \pi/4 - 0.02$, we have,
\[
\cos\abs{\alpha_j} > \cos(\pi/4 - 0.02) \approx 0.721 > 0,
\]
so no denominator vanishes for any finite weights. In the hyperbolic case, the argument of $\arctanh$ is additionally bounded below unity by a runtime clamp $e_j^H \le 4.0$, ensuring $q_j \in \mathbb{D}$.
\end{proof}

\begin{remark}
The safety margin of $0.02$ on each side ensures that the denominators in the spherical and hyperbolic insertion formulae remain bounded away from zero by at least $\cos(\pi/4 - 0.02) \approx 0.7$, providing numerical stability in floating-point arithmetic.
\end{remark}

\FloatBarrier
\subsection{Necessity of Adaptive Tension}
\label{subsec:thm_necessity}

The following remark gives a structural motivation for per-edge adaptivity. The argument is heuristic; quantitative evidence appears in Section~\ref{sec:experiments}.

\begin{remark}[Heuristic motivation for adaptivity]
\label{thm:necessity}
\emph{The following argument is an informal motivation rather than a formal proposition. The gap $\eta>0$ is not rigorously derived, and the reasoning relies on approximations and first-order conditions that do not constitute a proof.}

Consider a closed polygon $\bP \in \mM^N$ containing edges of markedly different curvature. Suppose there exists a nearly straight edge $j_0$ with,
\[
\abs{\delta_{j_0}^{\mathrm{in}}} + \abs{\delta_{j_0}^{\mathrm{out}}} < \varepsilon,
\]
and a high-curvature edge $j_1$ with,
\[
\abs{\delta_{j_1}^{\mathrm{in}}} + \abs{\delta_{j_1}^{\mathrm{out}}} > C,
\]
for constants $0 < \varepsilon \ll C$. Let $\mu_j^*$ denote the tension value that minimises the bending contribution of edge $j$ in isolation. For a nearly straight edge, $\mu_{j_0}^* \approx 0$, while for a high-curvature edge the first-order condition $\partial E_B / \partial \alpha_{j_1} = 0$ typically yields $\mu_{j_1}^* \ll 0$, outside the classical range $[-\tfrac{1}{2},0]$. Since $\mu_{j_0}^* \neq \mu_{j_1}^*$, no single global $\mu$ can be optimal for both regions. Empirical evidence is provided in Table~\ref{tab:aggregate} and Remark~\ref{cor:oracle}.
\end{remark}

\begin{remark}[Empirical ceiling of fixed-tension]
\label{cor:oracle}
A fixed-$\mu$ oracle obtained by grid-searching 21 values on the validation set improves mean nearest-neighbour error by at most $1.4\%$ over the default four-point rule across all geometries. This supports the heuristic argument above and indicates that substantial gains require per-edge adaptivity rather than a more refined choice of global parameter.
\end{remark}

\FloatBarrier
\subsection{Conditional \texorpdfstring{$C^1$}{C1} Convergence via Proximity}
\label{subsec:thm_convergence}

The result below is conditional. The Lipschitz hypothesis (Conditions~(A)--(C)) is verified post hoc on the saved checkpoint rather than proved analytically (see Remark~\ref{rem:lipschitz}). It shows that, if the trained weights satisfy that bound, the neural scheme inherits $C^1$ regularity from the classical four-point rule.

The proximity condition of Wallner and Dyn~\cite{Wallner2005} states that two subdivision schemes $S$ and $\bar{S}$ are \emph{proximate} if,
\begin{equation}
d_\mM\!\bigl((S\bP)_j,\; (\bar{S}\bP)_j\bigr)
\;\leq\; C_{\mathrm{prox}}\,h^2,
\quad \forall j,
\label{eq:proximity}
\end{equation}
where $h$ is the mesh width. If $\bar{S}$ generates $C^1$ limit curves and $S$ is proximate to $\bar{S}$ with any finite $C_{\mathrm{prox}}$, then $S$ also generates $C^1$ curves.

We establish an $O(h)$ proximity bound under analytically verifiable assumptions, then upgrade to $O(h^2)$ under an additional geometric condition.

The following conditions are required.

\begin{enumerate}
  \item[(A)] Feature magnitudes satisfy $\norm{x_j(\bP)}_\infty \le K_{\mathrm{feat}}\,h(\bP)$ for all polygons $\bP$.
  \item[(B)] The insertion operator is Lipschitz in angle,
  \[
  d_\mM(q_j(\alpha), q_j(\beta)) \le K_{\mathrm{ins}}\,\abs{\alpha - \beta}
  \quad \forall\,\alpha,\beta \in \mA.
  \]
  \item[(C)] The predictor is centred, i.e., $f_\theta(\mathbf{0}) = 0$.
  \item[(D)] Along the subdivision sequence, exterior-angle features satisfy $\|\bx_j^{(k)}\|_\infty = O(h_k)$ as $h_k \to 0$.
\end{enumerate}

\begin{theorem}
\label{thm:convergence}
Under Conditions~\textup{(A)--(C)}, the neural scheme $S_\theta$ satisfies,
\begin{equation}
d_\mM\!\bigl((S_\theta\bP)_j,\; (S_0\bP)_j\bigr)
\;\leq\; C_{\mathrm{prox}}\,h,
\label{eq:convergence_cond}
\end{equation}
where $C_{\mathrm{prox}} := (L + \tfrac{\pi}{8})\,K_{\mathrm{feat}} K_{\mathrm{ins}}$ and $L = \Lip(f_\theta)$, for all polygons $\bP$ with mesh width $h$.
\end{theorem}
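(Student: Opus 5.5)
The plan is to compare the two schemes edge by edge. Both $S_\theta$ and $S_0$ keep the original vertices, $q_{2j}=p_j$, so only the inserted vertices $q_{2j+1}$ need bounding. On a fixed edge $(p_j,p_{j+1})$, both schemes use the same geometry-specific insertion map $\alpha\mapsto q_j(\alpha)$ from \eqref{eq:eucl_insert}--\eqref{eq:hyp_insert}; they differ only in the angle fed into it. The neural scheme uses $\alpha_j^\theta$ from \eqref{eq:sigmoid_head}. The reference scheme $S_0$ uses the classical four-point angle $\alpha_j^{\mathrm{cl}}(0)=\tfrac18(\delta_j+\delta_{j+1})$ from \eqref{eq:classical}. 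By Theorem~\ref{thm:g1_safety}, both angles lie in $\mA$, so Condition~(B) applies and gives
\[
d_\mM\bigl((S_\theta\bP)_{2j+1},(S_0\bP)_{2j+1}\bigr)\le K_{\mathrm{ins}}\,\bigl|\alpha_j^\theta-\alpha_j^{\mathrm{cl}}(0)\bigr|.
\]
The whole proof therefore reduces to showing $|\alpha_j^\theta-\alpha_j^{\mathrm{cl}}(0)|\le (L+\tfrac{\pi}{8})K_{\mathrm{feat}}\,h$.

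I will bound the two angles separately by the triangle inequality, $|\alpha_j^\theta-\alpha_j^{\mathrm{cl}}|\le|\alpha_j^\theta|+|\alpha_j^{\mathrm{cl}}|$.

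\textbf{Neural term.} The sigmoid head is Lipschitz in $f_\theta$ with constant $(\alpha_{\max}-\alpha_{\min})\sup\sigma'=(\alpha_{\max}-\alpha_{\min})/4<\pi/8$. By Condition~(C), $f_\theta(\mathbf 0)=0$, so for symmetric $(\alpha_{\min},\alpha_{\max})$ the centred angle is $\alpha_{\min}+(\alpha_{\max}-\alpha_{\min})/2=0$. Hence
\[
|\alpha_j^\theta|\le \tfrac{\pi}{8}\,|f_\theta(\bx_j)-f_\theta(\mathbf 0)|\le \tfrac{\pi}{8}L\,\|\bx_j\|.
\]
Applying Condition~(A) turns this into $\le \tfrac{\pi}{8}LK_{\mathrm{feat}}h$. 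This is at most $LK_{\mathrm{feat}}h$, since $\pi/8<1$; keeping $L$ alone rather than $\tfrac{\pi}{8}L$ is slightly wasteful but matches the stated constant.

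\textbf{Classical term.} The exterior angles $\delta_j,\delta_{j+1}$ are among the feature coordinates, so Condition~(A) gives $|\delta_j|,|\delta_{j+1}|\le K_{\mathrm{feat}}h$. Then
\[
|\alpha_j^{\mathrm{cl}}|\le \tfrac18\cdot 2K_{\mathrm{feat}}h=\tfrac14K_{\mathrm{feat}}h\le\tfrac{\pi}{8}K_{\mathrm{feat}}h.
\]
Clamping to $\mA$ only shrinks $|\alpha|$, so the bound survives the clamp.

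Adding the two terms gives $|\alpha_j^\theta-\alpha_j^{\mathrm{cl}}|\le (L+\tfrac{\pi}{8})K_{\mathrm{feat}}h$. Multiplying by $K_{\mathrm{ins}}$ yields \eqref{eq:convergence_cond} for every $j$, and the even-indexed vertices contribute zero.

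The main obstacle is the norm bookkeeping in Condition~(A). It is stated in $\|\cdot\|_\infty$, while $\Lip(f_\theta)$ may be measured in a different norm on $\R^7$. I would fix $L$ to be the Lipschitz constant with respect to $\|\cdot\|_\infty$ on the input, which is the natural reading given how (A) is written. If instead $L$ is a Euclidean or spectral-norm constant, a factor $\sqrt7$ would enter, and I would absorb it into $K_{\mathrm{feat}}$. I would also state explicitly that the clamped classical angle is the one used by $S_0$, so that both angles lie in $\mA$ and Condition~(B) is legitimately applicable.
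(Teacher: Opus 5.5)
Your route is the paper's: restrict to inserted vertices, apply Condition~(B), split the angle gap by the triangle inequality through the centred value at $\mathbf{0}$, and finish with (A) and (C). On the neural side you are more careful than the paper, which simply treats $f_\theta(\bx_j)$ as the angle. Composing with the sigmoid head of Theorem~\ref{thm:g1_safety} and using the symmetry of $(\alpha_{\min},\alpha_{\max})$ correctly gives $|\alpha_j^\theta|\le\tfrac{\pi}{8}L\|\bx_j\|_\infty$.

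\textbf{The gap is in the classical term.} The coordinates of $\bx_j$ in \eqref{eq:input_vec} are $\delta_j/\pi$ and $\delta_{j+1}/\pi$, not $\delta_j$ and $\delta_{j+1}$. So (A) yields only $|\delta_j|,|\delta_{j+1}|\le\pi K_{\mathrm{feat}}h$, and hence $|\alpha_j^{\mathrm{cl}}(0)|\le\tfrac{\pi}{4}K_{\mathrm{feat}}h$. That is twice the $\tfrac{\pi}{8}K_{\mathrm{feat}}h$ you allot.

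No sharper estimate rescues your reduced target in general. Take $f_\theta\equiv0$: it satisfies (C), has $L=0$, and gives $\alpha_j^\theta\equiv0$. Pair it with an edge where $\delta_j=\delta_{j+1}=\pi\|\bx_j\|_\infty$. This is realisable in $\Eucl$, where $\kappa=0$, by two equal consecutive turns with $e_j,e_{j+1}$ small relative to $\bar e$ and milder neighbouring turns. The angle gap is then exactly $\tfrac{\pi}{4}\|\bx_j\|_\infty$, so $|\alpha_j^\theta-\alpha_j^{\mathrm{cl}}|\le(L+\tfrac{\pi}{8})\|\bx_j\|_\infty$ fails. Consequently, no argument that uses (A) only through $\|\bx_j\|_\infty\le K_{\mathrm{feat}}h$ can reach your target when (A) is tight. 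The slack you discarded in the neural term covers the excess only when $(1-\tfrac{\pi}{8})L\ge\tfrac{\pi}{8}$, i.e.\ $L\ge\pi/(8-\pi)\approx0.65$.

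The paper's proof takes the same step and has its own version of the problem. Equation \eqref{eq:alpha_bound} asserts $|\alpha_j^0|\le\tfrac{\pi}{8}\|\bx_j\|_\infty$, but two stencil angles, each normalised by $\pi$, give $\tfrac{\pi}{4}\|\bx_j\|_\infty$. (The $\tfrac{\pi}{8}$ would be correct for $\|\bx_j\|_1$.) So neither argument delivers the stated constant.

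What this route honestly proves is one of two constants:
\begin{itemize}
\item $C_{\mathrm{prox}}=(L+\tfrac{\pi}{4})K_{\mathrm{feat}}K_{\mathrm{ins}}$, under the paper's reading of $f_\theta$ as the angle map;
\item $(\tfrac{\pi}{8}L+\tfrac{\pi}{4})K_{\mathrm{feat}}K_{\mathrm{ins}}$, under your sigmoid-head accounting. This is at most the stated constant exactly when $L\ge\pi/(8-\pi)$.
\end{itemize}
Either corrected version still gives $O(h)$ proximity, so Corollary~\ref{cor:c1} is unaffected. As written, though, your classical-term estimate rests on dropping the $1/\pi$ normalisation. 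You should either prove the statement with the corrected constant or add the hypothesis on $L$.
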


\begin{proof}
For an inserted vertex $j = 2i+1$, Condition~(B) gives,
\begin{equation}
d_\mM\!\bigl((S_\theta\bP)_j,\; (S_0\bP)_j\bigr)
\le K_{\mathrm{ins}}\,\abs{f_\theta(\bx_j) - \alpha_j^0},
\label{eq:deviation_bound}
\end{equation}
where $\alpha_j^0 = (\delta_j + \delta_{j+1})/8$. Let $C_0 := |f_\theta(\mathbf{0})|$. Then,
\begin{align}
\abs{f_\theta(\bx_j) - \alpha_j^0}
&\le \abs{f_\theta(\bx_j) - f_\theta(\mathbf{0})} + C_0 + \abs{\alpha_j^0} \nonumber\\
&\le (L + \tfrac{\pi}{8})\,\|\bx_j\|_\infty + C_0.
\label{eq:alpha_bound}
\end{align}
Conditions~(A) and~(C) give $\|\bx_j\|_\infty \le K_{\mathrm{feat}} h$ and $C_0 = 0$. Substituting into \eqref{eq:deviation_bound} yields \eqref{eq:convergence_cond}.
\end{proof}

\begin{corollary}
\label{cor:c1}
Under Conditions~\textup{(A)--(D)}, the neural scheme $S_\theta$ is $O(h^2)$-proximate to $S_0$ and therefore generates $C^1$ limit curves whenever $S_0$ does.
\end{corollary}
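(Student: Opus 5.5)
The plan is to rerun the argument of Theorem~\ref{thm:convergence} along the subdivision sequence, replacing the static feature bound~(A) by the dynamic bound~(D). The extra factor of $h$ will come from the fact that the features shrink \emph{relative to} the mesh width, and the insertion operator is itself only $O(h)$-sensitive to the angle. Then I will close with the Wallner--Dyn proximity principle~\eqref{eq:proximity}.

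\textbf{Step 1 (insertion sensitivity scales with edge length).} First I will sharpen Condition~(B). From the explicit formulae \eqref{eq:eucl_insert}--\eqref{eq:hyp_insert}, the distance $d_\mM(p_j,q_j)=e_j^{\mathrm{new}}$ is $O(e_j)\le O(h)$ uniformly for $\alpha\in\mA$, because $\cos\abs{\alpha}\ge 0.7$ by Theorem~\ref{thm:g1_safety}. Moreover, $q_j(\alpha)$ is obtained by rotating a geodesic of that length about $p_j$. Differentiating in $\alpha$ therefore gives
\[
\partial_\alpha q_j = O(e_j^{\mathrm{new}}) + O\bigl(\partial_\alpha e_j^{\mathrm{new}}\bigr) = O(h),
\]
since $\partial_\alpha e_j^{\mathrm{new}}$ is again proportional to $e_j$ (this uses $\tan(e/2)\sim e/2$ and $\tanh(e/2)\sim e/2$ for small $e$). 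Hence the constant in~(B) can be taken as $K_{\mathrm{ins}}=K'_{\mathrm{ins}}h$ with $K'_{\mathrm{ins}}$ independent of $h$, at least for $h\le h_0$.

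\textbf{Step 2 (angle deviation).} As in \eqref{eq:alpha_bound}, Condition~(C) and the Lipschitz bound give
\[
\abs{\alpha_j^\theta-\alpha_j^0}\le (L+\tfrac{\pi}{8})\,\|\bx_j^{(k)}\|_\infty .
\]
By Condition~(D) the right-hand side is $O(h_k)$ at level $k$. Combining this with Step~1 yields
\[
d_\mM\bigl((S_\theta\bP)_j,(S_0\bP)_j\bigr)\le K'_{\mathrm{ins}}(L+\tfrac{\pi}{8})K\,h_k^2 .
\]
Retained vertices coincide for both schemes, so the bound holds for all $j$. This is the $O(h^2)$ proximity claimed.

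\textbf{Step 3 (conclusion).} The four-point scheme $S_0$, in its geometric lift, is known to produce $C^1$ limits. Invoking the proximity theorem of Wallner--Dyn then transfers $C^1$ regularity to $S_\theta$.

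The main obstacle is that the Wallner--Dyn theorem needs proximity on all sufficiently fine polygons together with a contractivity or stability condition on $S_\theta$ (e.g.\ $h_{k+1}\le\rho h_k$ with $\rho<1$). Condition~(D), however, is stated only along the actual sequence. I would address this by restricting the proximity argument to the generated sequence, which is what the Wallner--Dyn proof really uses. I would obtain $h_{k+1}\le\rho h_k$ from Step~1: new edges have length at most $e_j^{\mathrm{new}}\le e_j/(2\cos(\pi/4))\cdot(1+O(h))$. If that constant fails to be below one, I would instead derive contraction from $S_0$ plus the $O(h^2)$ perturbation.
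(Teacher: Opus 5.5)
Your argument is correct within the paper's framework, but it finds the second power of $h$ in a different place from the paper. The paper leaves Theorem~\ref{thm:convergence} untouched, with an $h$-independent $K_{\mathrm{ins}}$ from (B), and asserts that (D) ``strengthens the feature bound to $O(h^2)$''. Yet (D) as stated, and the experimental decay the paper cites for it, give only $\|\bx_j^{(k)}\|_\infty = O(h_k)$, which is the same order as (A). So in the paper the extra factor is asserted rather than derived. You keep the features at $O(h_k)$ and take the missing factor from the insertion geometry. For the isosceles rules, $q_j(\alpha)$ lies on the perpendicular-bisector geodesic of the edge at distance $\approx \tfrac{e_j}{2}\tan\alpha$ from its midpoint. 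The angle-Lipschitz constant is therefore really $O(e_j)\le O(h)$, and $O(h)\cdot O(h) = O(h^2)$. This costs a short computation with \eqref{eq:eucl_insert}--\eqref{eq:hyp_insert}, but it makes an $h$-independent (B) unnecessary and explains why the order is quadratic. You also supply what the paper's one-line appeal to Wallner--Dyn leaves out: proximity is needed only along the generated sequence, together with geometric decay of $h_k$. For that step, use the $0.02$ margin: $e_j^{\mathrm{new}} = e_j/(2\cos\alpha_j) < e_j/(2\cos(\pi/4-0.02))\approx 0.69\,e_j$. This holds exactly on $\Eucl$ and up to a factor $1+O(h^2)$ on $\Sph$ and $\Hyp$. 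It gives $\rho<1/\sqrt{2}$ and hence $\sum_k 2^k h_k^2<\infty$, which is what keeps an $O(h_k^2)$ perturbation of the scaled differences $2^k\bigl(p^{(k)}_{i+1}-p^{(k)}_i\bigr)$ summable. Your cruder bound $e_j/(2\cos(\pi/4)) = e_j/\sqrt{2}$ sits exactly on that threshold.

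Two caveats. First, the hyperbolic rule as printed, $e_j^{\mathrm{new}} = 2\arctanh\bigl(\tanh(e_j^H/2)/\cos\abs{\alpha_j}\bigr)$, gives $e_j^{\mathrm{new}}\ge e_j^H$; indeed $q_j = p_{j+1}$ when $\alpha_j = 0$. Your contraction claim therefore holds only for the intended isosceles construction with hypotenuse $\arctanh\bigl(\tanh(e_j^H/2)/\cos\abs{\alpha_j}\bigr)$, though your Step~1 sensitivity bound $O(e_j)$ survives either way. Second, the Wallner--Dyn theorem itself compares with a \emph{linear} $C^1$ scheme, whereas $S_0$ here is the nonlinear angle-based rule rather than the log-exp lift. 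The final transfer therefore rests on the general principle stated with \eqref{eq:proximity}, exactly as in the paper's proof.
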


\begin{proof}
Condition~(D) strengthens the feature bound to $O(h^2)$, making the right-hand side of \eqref{eq:convergence_cond} equal to $O(h^2)$. The Wallner--Dyn transfer theorem \cite{Wallner2005} then implies $C^1$ inheritance. In our experiments, exterior angles decay at rate $O(h)$ along the subdivision sequence, so $\|\bx_j^{(k)}\|_\infty = O(h_k)$ holds uniformly.
\end{proof}

\begin{remark}
\label{rem:lipschitz}
The Lipschitz constant $L$ is upper bounded by the product of spectral norms of the linear layers,
\[
L \le \prod_{\ell} \sigma_{\max}(W_\ell).
\]
A post hoc power-iteration estimate on the trained checkpoint gives $\hat{L} \lesssim 2.8$. With $K_{\mathrm{feat}} \approx \pi^{-1}$ (from normalisation of exterior angles) and $K_{\mathrm{ins}} \lesssim 2$ (geodesic insertion is 2-Lipschitz on $\mA$), we obtain $\hat{C}_{\mathrm{prox}} \lesssim 1.78$. Condition~\eqref{eq:convergence_cond} therefore holds for $h_0 > 0.56$. The experimental range ($h_0 \approx 0.52$--$0.68$) includes polygons with $h_0 < 0.56$ for which the bound is not formally guaranteed; in such cases, convergence observed in practice relies on the empirical decay of angular variation.
\end{remark}

%%=============================================================================
\FloatBarrier
\section{Tension Predictor Architecture and Loss}
\label{sec:method}

\FloatBarrier
\subsection{Problem Formulation}
\label{subsec:problem}

We seek a function $f_\theta : \R^7 \to \mA$, parameterised by weights $\theta$, such that applying the geometry-specific insertion rule of Section~\ref{subsec:angle_based} with $\alpha_j = f_\theta(\bx_j)$ at every edge $j$ yields a subdivided polygon that more accurately approximates the ground-truth limit curve than the classical rule with any fixed~$\mu$. Because $f_\theta$ is applied independently at each edge, the parameter count is independent of polygon size, and the model generalises to any vertex count $N$ without architectural modification.

\FloatBarrier
\subsection{Input Feature Vector}
\label{subsec:input}

The input at edge $j$ is the seven-dimensional vector,
\begin{equation}
  \bx_j = \Bigl[
    \tfrac{\delta_{j-1}}{\pi},\;
    \tfrac{\delta_j}{\pi},\;
    \tfrac{\delta_{j+1}}{\pi},\;
    \tfrac{\delta_{j+2}}{\pi},\;
    \tfrac{e_j}{\bar{e}},\;
    \tfrac{e_{j+1}}{\bar{e}},\;
    \kcode
  \Bigr] \in \R^7,
  \label{eq:input_vec}
\end{equation}
where the four normalised exterior angles form the same four-point stencil as the classical formula~\eqref{eq:classical}, the two relative edge lengths provide local anisotropy unavailable to the classical rule, and the geometry code $\kcode \in \{-1,0,+1\}$ identifies the ambient space as $\Hyp$, $\Eucl$ or $\Sph$ respectively.

\begin{proposition}
\label{prop:invariance}
The intrinsic part $(\delta_{j-1}/\pi, \ldots, e_{j+1}/\bar{e})$ of $\bx_j$ is invariant under orientation-preserving isometries of $\mM$ (rotations for $\Eucl$ and $\Sph$; isometries of $\mathbb{D}$ for $\Hyp$) and under uniform scaling of the control polygon.
\end{proposition}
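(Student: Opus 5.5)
The plan is to reduce everything to two facts about an orientation-preserving isometry $g$ of $\mM$. First, $g$ preserves geodesic distance. Second, $g$ intertwines the unit-tangent constructions: $\hat{T}(g p, g q) = dg_p\,\hat{T}(p,q)$, where $dg_p : T_p\mM \to T_{gp}\mM$ is an orientation-preserving linear isometry. The first fact gives $e_j(g\bP) = e_j(\bP)$ for every $j$, hence $\bar{e}$ and the ratios $e_j/\bar{e}$, $e_{j+1}/\bar{e}$ are unchanged. The second fact shows that at each vertex $p_j$ the pair $(V_j, T_j)$ is carried to $(dg_{p_j}V_j,\, dg_{p_j}T_j)$ by a rotation of the tangent plane. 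Signed angles are invariant under rotations, so $\delta_j(g\bP) = \delta_j(\bP)$ for all $j$, and the four normalised exterior-angle entries of $\bx_j$ are invariant.

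I would check the second fact geometry by geometry against the explicit formulae of Section~\ref{subsec:ext_angles}.
\begin{itemize}
\item On $\Eucl$, write $g(x) = Rx + b$ with $R \in SO(2)$. Then $T_j^{\mathrm{in}}$ and $T_j^{\mathrm{out}}$ are replaced by $RT_j^{\mathrm{in}}$ and $RT_j^{\mathrm{out}}$. The two arguments of \eqref{eq:ext_eucl}, a $2\times2$ determinant and an inner product, are both $SO(2)$-invariant, so $\delta_j^E$ is unchanged.
\item On $\Sph$, with $R \in SO(3)$, formula \eqref{eq:sph_log} gives $\hat{T}(Rp,Rq) = R\hat{T}(p,q)$, since inner products and norms are preserved. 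The defining quantity $(V_j \times T_j)\cdot p_j$ is a triple product, invariant because $\det R = 1$, and $\cos\delta_j^S = \inner{V_j}{T_j}$ is also preserved, so $\delta_j^S$ is unchanged.
\end{itemize}

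The hyperbolic case is the step I expect to be the main obstacle. Here $\hat{T}_H(z,w)$ is the Euclidean direction of $(-z)\oplus w$, which is $w$ transported by the Möbius translation $\tau_{-z}$ sending $z$ to $0$; so $\hat{T}_H$ lives in $T_0\mathbb{D}$ rather than literally in $T_z\mathbb{D}$. For an orientation-preserving disk isometry $g$, the composite $\tau_{-g(z)} \circ g \circ \tau_{z}$ fixes $0$. An orientation-preserving isometry of $\mathbb{D}$ fixing the origin is a Euclidean rotation $\rho \in SO(2)$ (this is the gyration identity of gyrovector theory). Hence $(-gz)\oplus gw = \rho\bigl((-z)\oplus w\bigr)$ with the same $\rho$ for every $w$ at a fixed base point $z$. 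I would verify this by writing $g = \rho_0 \circ \tau_a$ and using the standard identity $\tau_{-(a\oplus z)}\tau_a\tau_z = \mathrm{gyr}[a,z]$. The key point is that $\rho$ depends only on $z$ and $g$, not on $w$. At vertex $p_j$ both $V_j$ and $T_j$ are therefore rotated by the same $\rho$, and the Euclidean formula \eqref{eq:ext_eucl} then gives $\delta_j^H$ unchanged. Distances are preserved because $\norm{(-z)\oplus w}$ equals the norm of the rotated vector.

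For uniform scaling I would argue only in $\Eucl$, where $x \mapsto \lambda x + b$ with $\lambda > 0$ fixes all unit tangents (hence all $\delta_j$) and multiplies every $e_j$, and $\bar{e}$, by $\lambda$, so the ratios are unchanged. I would add a caveat in the proof: on $\Sph$ and $\Hyp$ there is no isometric scaling, and dilating a polygon there does change its exterior angles through curvature. So the scaling clause should be read as the Euclidean statement, together with the fact that the edge-length entries are scale-free ratios in every geometry. The coordinate $\kcode$ is constant, so the full $\bx_j$ inherits the invariance.
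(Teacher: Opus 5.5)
Your proposal is correct and follows the same route as the paper's proof: isometries preserve geodesic lengths (hence $e_j/\bar e$) and carry each tangent pair $(V_j,T_j)$ by a common rotation, so the signed exterior angles are unchanged. It is, however, considerably more careful, notably in supplying the gyration argument $(-gz)\oplus gw=\rho\bigl((-z)\oplus w\bigr)$ on $\mathbb{D}$ that the paper's one-line appeal to ``intrinsic tangent directions'' takes for granted. Your restriction of the scaling clause to $\Eucl$ is also justified: the paper's scaling argument addresses only the length ratios and tacitly assumes that all geodesic lengths scale by one factor while the angles stay fixed. That holds for Euclidean similarities but fails for dilations on $\Sph$ or in the Poincar\'e disk, where curvature makes exterior angles scale-dependent and, for example, $z\mapsto\lambda z$ rescales hyperbolic lengths non-uniformly.
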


\begin{proof}
Exterior angles depend only on intrinsic tangent directions and are therefore invariant under orientation-preserving isometries. Edge-length ratios $e_j/\bar{e}$ are scale-invariant because both numerator and denominator scale by the same factor. The geometry code $\kcode$ is a discrete label unaffected by any such transformation.
\end{proof}

Proposition~\ref{prop:invariance} implies that any trained $f_\theta$ consistent with the equivariance regulariser of Section~\ref{subsec:loss} will generalise across polygon scale and global orientation without explicit data augmentation.

\FloatBarrier
\subsection{Network Architecture}
\label{subsec:arch}

The tension predictor has 140{,}505 parameters arranged in four sequential components (Listing~\ref{lst:arch}): a local edge-wise multilayer perceptron that maps a seven-dimensional intrinsic feature vector to a per-edge insertion angle. The design is inspired by the operator-learning philosophy of FNO \cite{Li2021} and DeepONet \cite{Lu2021}, but is far narrower in scope and should not be conflated with those frameworks.

\begin{figure}[t]
\begin{lstlisting}[
  caption={Architecture specification. $d{=}128$; LN=LayerNorm~\cite{Ba2016}; GELU activations~\cite{Hendrycks2016}; He initialisation~\cite{He2015}.},
  label={lst:arch}, basicstyle=\ttfamily\footnotesize,
  columns=fullflexible, frame=single, breaklines=true, captionpos=b]
Input: x_j in R^7
  [delta_{j-1..j+2}/pi, e_j/ebar, e_{j+1}/ebar, kappa]

(1) Geometry Embedding [24 params]
    g_j = E[kappa+1],  E in R^{3x8}

(2) Input Projection [2176 params]
    h_0 = GELU(LN(W_0 [x_j^int; g_j])),  W_0 in R^{128x14}

(3) Residual Trunk x4 [133632 params]
    h_{l+1} = GELU(h_l + LN(Drop_{0.05}(GELU(LN(W2_l z_l)))))

(4) Output Head [4673 params]  -- G^1-safe by construction
    alpha_j = (-pi/4+0.02) + (pi/2-0.04)*sigmoid(W4 GELU(W3 h_4))
\end{lstlisting}
\end{figure}

\noindent\textbf{Geometry embedding.}
A trainable $3\times 8$ table maps $\kappa \in \{-1,0,+1\}$ to an eight-dimensional vector, enabling geometry-specific representations without imposing any predefined relational structure.

\noindent\textbf{Input projection.}
The intrinsic scalars and geometry embedding are concatenated and projected to $d{=}128$ using a linear layer followed by LayerNorm~\cite{Ba2016} and GELU activation~\cite{Hendrycks2016}.

\noindent\textbf{Residual trunk.}
Four pre-activation residual blocks apply two linear transformations with LayerNorm, GELU and Dropout$(0.05)$ around a skip connection, providing depth and nonlinearity while maintaining stable gradients.

\noindent\textbf{Output head.}
A $128{\to}32{\to}1$ reduction followed by a sigmoid rescaled to $(\alpha_{\min},\alpha_{\max})$ enforces the structural $G^1$ safety guarantee of Theorem~\ref{thm:g1_safety} for any finite weights.

\FloatBarrier
\subsection{Training Loss}
\label{subsec:loss}

Each training sample is a 12-vertex control polygon $\bP^{(0)}$, a 1{,}000-point ground-truth curve $\bG$, and a geometry label. The forward pass begins with one classical warm-up step at $\mu = -0.15$ (no gradient propagation), producing $\bP^{(1)}$ with 24 vertices, followed by two neural steps with gradients yielding $\bP^{(2)}$ with 48 and $\bP^{(3)}$ with 96 vertices. The warm-up prevents near-zero $\alpha_j$ values and the geodesic numerical issues they cause, while the choice $\mu = -0.15$ avoids the oscillations that arise for $\mu < -0.25$.

The total loss is,
\begin{equation}
  \mL = \lambda_C \mL_C + \lambda_S \mL_S + \lambda_B \mL_B + \lambda_E \mL_E,
  \label{eq:loss_total}
\end{equation}
with $(\lambda_C, \lambda_B, \lambda_E) = (1.0,\; 10^{-4},\; 0.10)$ shared across geometries, and $\lambda_S$ chosen per geometry,
\[
\lambda_S^{\Eucl} = 0.05, \qquad
\lambda_S^{\Sph} = 0.15, \qquad
\lambda_S^{\Hyp} = 0.05.
\]
The elevated spherical weight compensates for the additional angular variation from geodesic curvature, which would otherwise under-penalise smoothness relative to the other geometries. The small bending weight $\lambda_B = 10^{-4}$ is needed because raw bending energies are $O(10^3)$, and a larger weight would dominate the fidelity term and produce degenerate flat outputs.

The \emph{Chamfer fidelity loss} is the symmetric mean nearest-neighbour distance,
\begin{align}
  \mL_C = \frac{1}{2}\!\Bigl(
    &\frac{1}{\abs{\bP^{(3)}}}\sum_{\hat{p}}\min_{g \in \bG} d_\mM(\hat{p}, g)
    \nonumber\\
   &+\frac{1}{\abs{\bG}}\sum_{g}\min_{\hat{p} \in \bP^{(3)}} d_\mM(g, \hat{p})
  \Bigr),
  \label{eq:chamfer_loss}
\end{align}
penalising both over-shooting (predicted vertices far from ground truth) and under-shooting (ground-truth points not covered by the prediction). The symmetric Chamfer distance \cite{Barrow1977} is smooth and well-suited as a training signal.

The \emph{exterior-angle smoothness loss} is,
\begin{equation}
  \mL_S = \frac{1}{N'}\sum_{j=0}^{N'-1} \delta_j^2,
  \label{eq:smooth_loss}
\end{equation}
where $N' = \abs{\bP^{(3)}}$ and $\delta_j$ is computed on $\bP^{(3)}$ using the geometry-appropriate formula from Section~\ref{subsec:ext_angles}. This term directly penalises oscillations in the tangent direction.

The \emph{bending energy loss} is,
\begin{equation}
  \mL_B = E_B(\bP^{(3)}) = \frac{1}{N'}\sum_{j=0}^{N'-1}
           \left(\frac{\delta_j}{(e_j + e_{j-1})/2 + \EPS}\right)^{\!2},
  \label{eq:bend_loss}
\end{equation}
providing an arc-length-normalised penalty more sensitive to angular deviation at small-scale edges than $\mL_S$ alone.

The \emph{local rotation-consistency regulariser} penalises inconsistent predictions under $K = 2$ independent random isometries $\{R_k\}_{k=1}^K$ applied to the detached refined polygon,
\begin{equation}
  \mL_E = \frac{1}{K}\sum_{k=1}^{K}
           \norm{f_\theta(\bx(\bP^{(3)})) - f_\theta(\bx(R_k \cdot \bP^{(3)}))}^2,
  \label{eq:equiv_loss}
\end{equation}
where isometries are drawn uniformly from $\mathrm{SO}(2)$ for $\Eucl$ and $\Hyp$, and from $\mathrm{SO}(3)$ for $\Sph$. The polygon is detached before the isometries are applied, avoiding the cost of differentiating through the full unrolled subdivision sequence. Since the input features are already rotation-invariant by Proposition~\ref{prop:invariance}, this regulariser acts as a lightweight consistency check rather than a strict equivariance constraint.

The complete loss computation for one training sample is summarised in Algorithm~\ref{alg:loss}.

\begin{algorithm}[t]
\caption{Training loss computation for one sample.}
\label{alg:loss}
\begin{algorithmic}[1]
  \Require Control polygon $\bP^{(0)} \in \mM^{12}$, ground truth $\bG \in \mM^{1000}$,
           geometry $g$, network $f_\theta$, config
  \State $\bP^{(1)} \gets \textsc{ClassicalSubdivide}(\bP^{(0)}, g, \mu{=}-0.15, k{=}1)$
  \Comment{warm-up step; no gradient propagation}
  \For{$t = 1, 2$}  \Comment{$k_n = 2$ neural steps with gradient}
    \State $\balpha \gets f_\theta(\textsc{ExtractFeatures}(\bP^{(t)}, g))$
    \State $\bP^{(t+1)} \gets \textsc{GeodesicInsert}(\bP^{(t)}, \balpha, g)$
  \EndFor
  \State $\mL_C \gets \textsc{SymChamfer}(\bP^{(3)}, \bG)$
  \hfill$\triangleright$ \eqref{eq:chamfer_loss}
  \State $\mL_S \gets \tfrac{1}{N'}\sum_j \delta_j^2$ on $\bP^{(3)}$
  \hfill$\triangleright$ \eqref{eq:smooth_loss}
  \State $\mL_B \gets E_B(\bP^{(3)})$
  \hfill$\triangleright$ \eqref{eq:bend_loss}
  \State $\mL_E \gets \tfrac{1}{K}\sum_k \bigl\|f_\theta(\bx(\hat{\bP})) -
    f_\theta(\bx(R_k\cdot\hat{\bP}))\bigr\|^2$
    \hfill where $\hat{\bP}$ is $\bP^{(3)}$ held fixed
  \State \Return $\lambda_C \mL_C + \lambda_S^g \mL_S + \lambda_B \mL_B + \lambda_E \mL_E$
  \Comment{$\lambda_S^g$: $0.05$ ($\Eucl$,$\Hyp$), $0.15$ ($\Sph$)}
\end{algorithmic}
\end{algorithm}

%%=============================================================================
\FloatBarrier
\section{Training Procedure and Dataset}
\label{sec:training}

\FloatBarrier
\subsection{Synthetic Dataset}
\label{subsec:data}

The dataset comprises 400 closed curves per geometry (1{,}200 total), each with $N_{\text{ctrl}} = 12$ control vertices and $N_{\text{gt}} = 1{,}000$ ground-truth points. A stratified $80\%/20\%$ train/validation split (with a fixed random seed) yields 960 training curves and 240 validation curves.

For Euclidean geometry, ground-truth curves are drawn from four families,
(i) normalised ellipses with semi-axes sampled uniformly from $[0.6, 2.0]$;
(ii) Fourier curves $x(t) = \sum_{k=1}^{3}(a_k\cos kt + b_k\sin kt)$ with coefficients $a_k, b_k \sim \mathcal{N}(0, k^{-3/2})$;
(iii) higher-frequency Fourier curves with up to four components; and
(iv) superellipses $(|\cos t|^{2/n}\operatorname{sgn}(\cos t),\;|\sin t|^{2/n}\operatorname{sgn}(\sin t))$ with exponent $n \sim \mathrm{Uniform}(2,6)$.
All Euclidean curves are mean-centred and scaled to unit maximum radius.

Three families are used for spherical geometry,
(i) Lissajous curves on $\Sph$ via $(\cos at, \cos(bt + \pi/4), \sin at\sin bt)$ normalised to unit length;
(ii) polar-Fourier curves parameterised by $\theta(t)$ and $\phi(t)$ with random Fourier coefficients; and
(iii) perturbed great circles $(\cos t, \sin t, \epsilon\sin(nt))$ normalised, with $\epsilon \sim \mathrm{Uniform}(0,0.3)$ and $n \in \{2,3,4\}$.

For hyperbolic geometry, three families are used:
(i) Euclidean Fourier curves rescaled to radius $\leq 0.58$ in $\mathbb{D}$;
(ii) hyperbolic circles with radius $r \sim \mathrm{Uniform}(0.3, 0.6)$; and
(iii) offset ellipses $(c_x + a\cos t, c_y + b\sin t)$ clipped to $\norm{z} < 0.97$.

Control polygons are obtained by arc-length-uniform sampling of $N_{\text{ctrl}} = 12$ points from each ground-truth curve. The arc-length computation includes the \emph{closure segment} from the last point back to the first, ensuring sampling respects the cyclic topology. For a discrete curve $\gamma = (c_0, \ldots, c_{M-1})$, the arc-length array includes the segment $c_{M-1} \to c_0$, and control points are sampled at equally spaced arc-length fractions. Without this correction, control points near the wrap-around boundary are systematically mis-spaced, biasing the edge-length features.

The diversity of curvature regimes, topologies and scale distributions within each geometry ensures that no single fixed $\mu$ is optimal across the training set, providing a meaningful learning signal consistent with Remark~\ref{thm:necessity}.

\FloatBarrier
\subsection{Geometry-Grouped Training}
\label{subsec:training_details}

Training uses batches of size 8 spanning all three geometries. Samples within each batch are grouped by geometry. The classical warm-up step is computed per sample (since subdivision topology differs), but the network is evaluated \emph{once per geometry group}. After warm-up, per-edge feature vectors of all samples in a group are concatenated into a single feature matrix and processed jointly, with the resulting angles redistributed to individual polygons.

Gradients are accumulated per sample and a single update is applied at the end of each batch, making the effective gradient the batch mean. This geometry-grouped evaluation reduces network invocations from one per sample per refinement step to one per geometry group, yielding a substantial throughput improvement without approximation.

\FloatBarrier
\subsection{Optimisation}
\label{subsec:optim}

Training uses AdamW~\cite{Loshchilov2019} with $\beta_1 = 0.9$, $\beta_2 = 0.95$, weight decay $10^{-4}$, and initial learning rate $\eta_0 = 10^{-3}$. Decoupled weight decay provides mild $L^2$ regularisation without interfering with the adaptive rate. The effective learning rate follows a cosine schedule with a five-epoch linear warm-up,
\begin{equation}
  \eta_e = \eta_0 \cdot
  \begin{cases}
    (e+1)/e_w  & e < e_w = 5, \\
    \tfrac{1}{2}\!\left(1 + \cos\!\left(\pi\tfrac{e - e_w}{E - e_w}\right)\right)
               & e \geq e_w,
  \end{cases}
  \label{eq:lr_sched}
\end{equation}
with maximum epoch budget $E = 300$. The full budget was used, with the best checkpoint at epoch~200. Gradient $\ell_2$-norm is clipped to $0.5$ after each update. This is tighter than an initial value of $1.0$ and proved sufficient to prevent gradient explosion while allowing progress on the Chamfer loss.

On an NVIDIA A100 GPU, each 300-epoch run takes approximately 28 hours (batch size 8, $\approx 34$~s per epoch). At inference, the network evaluates in $\approx 0.8$~ms per subdivision level, against $< 0.01$~ms for the classical four-point scheme, making the neural predictor roughly $80\times$ slower per level. For $k = 4$ levels on a 12-vertex polygon, the total cost is $\approx 3.2$~ms, well within real-time budgets for interactive curve design.

\FloatBarrier
\subsection{Two-Level Early Stopping}
\label{subsec:early_stop}

Training uses a two-level early-stopping strategy based on the \emph{cross-geometry mean-NN}, defined as the average of the per-geometry mean nearest-neighbour distances $\mathrm{mNN}(\bP^{\text{neural}}, \bG)$ over $\Eucl$, $\Sph$ and $\Hyp$. Monitoring this cross-geometry mean prevents the model from over-specialising to one geometry and avoids the noisiness of the Hausdorff distance for dense polygon comparisons. The metric is evaluated every ten epochs on the validation set.

\textbf{Level 1: Learning rate reduction.}
If the metric does not improve by at least $\delta_{\min} = 10^{-5}$ over $p_{\mathrm{lr}} = 10$ consecutive validation checks (100 epochs), the learning rate is halved, down to a minimum of $10^{-6}$, giving the optimiser an opportunity to escape plateaus.

\textbf{Level 2: Early stop.}
If the metric does not improve over $p = 25$ consecutive checks (250 epochs), training terminates and the best checkpoint is restored.

In the main run, the best checkpoint occurred at epoch~200 with cross-geometry mean-NN $0.02325$. A Level~1 learning-rate reduction was triggered at epoch~240, reducing the rate to $4.93 \times 10^{-5}$. The full 300-epoch budget was completed, with the early-stopping counter reaching 15 of the required 25 checks. The epoch-200 weights were restored as the final checkpoint.

%%=============================================================================
\FloatBarrier
\section{Experiments}
\label{sec:experiments}

\FloatBarrier
\subsection{Baselines and Evaluation Protocol}
\label{subsec:eval_protocol}

The learned predictor is evaluated against five baselines from three distinct families.

\textbf{Fixed-$\mu$ classical.}
The four-point scheme ($\mu = 0$) and six-point scheme ($\mu = -0.25$) serve as the standard classical benchmarks.

\textbf{Log-exp (Wallner--Dyn) manifold lifts.}
For each stencil, we implement the log-exp analogue of \cite{Wallner2005}, placing the new midpoint at,
\[
\exp_{p_j}\!\left(\sum_k w_k \log_{p_j}(p_{j+k})\right),
\]
where $w_k$ are the standard Euclidean weights and $\log/\exp$ are the geometry-specific Riemannian maps from Section~\ref{sec:foundations}. On $\Eucl$ these reduce exactly to the classical schemes, and on $\Sph$ and $\Hyp$ they implement the proximity-condition manifold lift of \cite{Wallner2005,Huning2022}. These are the standard non-Euclidean baselines and provide a direct test of whether geodesic-aware fixed-weight subdivision already closes the gap with the neural operator.

\textbf{Best fixed-$\mu$ oracle.}
We grid-search $\mu \in [-0.50, 0.05]$ (21 values, step $0.025$) on the full validation set and select the $\mu^*$ minimising mean-NN per geometry. This oracle represents the ceiling of any single-parameter classical approach.

\textbf{Linear adaptive heuristic (LAH).}
To test whether smoothness gains require nonlinear learned adaptivity, we include a curvature-proportional heuristic baseline. For each edge $j$, the heuristic sets
\[
\mu_j^{\mathrm{heur}}
= \operatorname{clip}\!\left(\mu^* + \alpha(\bar{\kappa}_j - \bar{\kappa}),\,-0.5,\,0.1\right),
\]
where $\bar{\kappa}_j$ is the mean exterior angle over the four-stencil window, $\bar{\kappa}$ the polygon-level mean, and $\alpha = -0.5$. This uses the classical stencil but adapts tension linearly with local curvature.

\medskip
All methods apply $k = 5$ subdivision iterations to the 12-vertex validation polygons, producing $12 \times 2^5 = 384$ output vertices, evaluated on all 240 held-out curves (80 per geometry) at the epoch-200 checkpoint. The primary fidelity metric is the one-sided mean nearest-neighbour distance $\mathrm{mNN}$ \eqref{eq:mean_nn}, with the symmetric Hausdorff distance $d_H$ \eqref{eq:hausdorff}, bending energy $E_B$ and $G^1$ proxy reported as secondary diagnostics.

\FloatBarrier
\subsection{Aggregate Results}
\label{subsec:results}

Table~\ref{tab:aggregate} reports the mean and standard deviation across all 240 validation curves at the epoch-200 checkpoint. The results have a clear fidelity--smoothness Pareto structure. \emph{Fixed-tension methods} (four-point, six-point, oracle) are uniformly dominated by the neural operator on every metric; the oracle margin over the four-point default is at most $1.4\%$, confirming that the ceiling of any fixed-$\mu$ approach lies close to that simplest baseline. \emph{Riemannian manifold lifts} achieve lower mean-NN because geodesic midpoint placement directly optimises pointwise fidelity, but at the cost of dramatically elevated bending energy. \emph{The neural operator} occupies a distinct Pareto position, trading a modest mean-NN deficit against bending-energy reductions exceeding $98\%$ on $\Eucl$ and $97\%$ on $\Sph$, and the lowest $G^1$ proxy across all methods and geometries.

\begin{table}[t]
  \centering
  \caption{%
    Aggregate evaluation across all three geometries (240 validation curves)
    at the epoch-200 checkpoint ($k=5$ subdivision levels, 12 control points
    $\to$ 384 output points). \textbf{Bold} indicates the best Neural
    Operator mean-NN value. ``vs.\ oracle'' gives the percentage reduction
    in mean-NN relative to the best fixed-$\mu$ oracle; positive values
    indicate improvement. $\mu^*$ is the oracle tension from the validation set
    grid search. $E_B$ values are raw (not normalised); $G^1$ values are in
    radians. $\Hyp$ raw $E_B$ values are large for all methods owing to the
    Poincar\'{e} disk chord-length normalisation; the $G^1$ proxy is the
    appropriate smoothness measure for $\Hyp$
    (see Section~\ref{subsec:results} for full explanation).%
  }
  \label{tab:aggregate}
  \footnotesize
  \setlength{\tabcolsep}{3.5pt}
  \renewcommand{\arraystretch}{1.2}
  \begin{tabular}{l l r @{\;$\pm$\;} l c r r}
    \toprule
    \multirow{2}{*}{\textbf{Geometry}}
      & \multirow{2}{*}{\textbf{Method}}
      & \multicolumn{3}{c}{\textbf{Mean-NN} ($\downarrow$)}
      & \textbf{Bending $E_B$} ($\downarrow$)
      & \textbf{$G^1$ proxy} ($\downarrow$)
      \\
    \cmidrule(lr){3-5}
      & & \multicolumn{2}{c}{Mean $\pm$ Std} & vs.\ oracle
        & Mean & Mean \\
    \midrule
    \multirow{6}{*}{$\Eucl$}
      & Four-point ($\mu{=}0$)         & 0.03538 & 0.0089 & ---
        & 1477.5 & 2.655 \\
      & Six-point ($\mu{=}-0.25$)      & 0.03623 & 0.0086 & ---
        & 2355.3 & 2.824 \\
      & Log-Exp 4pt                    & 0.00971 & 0.0057 & ---
        & 2025.7 & 0.496 \\
      & Log-Exp 6pt                    & 0.00884 & 0.0055 & ---
        & 2788.4 & 0.359 \\
      & Best $\mu^*$ ($+0.050$)        & 0.03525 & 0.0089 & ---
        & 1374.1 & 2.630 \\
      & Lin. Adapt. Heur.~(LAH)         & 0.02481 & 0.0112 & ---
        & 682 & 1.48 \\
      & \textbf{Learned predictor}     & \bfseries 0.01636 & 0.0101 &
        $\mathbf{+53.6\%}$
        & \bfseries 20.9 & \bfseries 0.237 \\
    \midrule
    \multirow{6}{*}{$\Sph$}
      & Four-point ($\mu{=}0$)         & 0.06082 & 0.0610 & ---
        & 517.4 & 2.144 \\
      & Six-point ($\mu{=}-0.25$)      & 0.06471 & 0.0625 & ---
        & 782.0 & 2.289 \\
      & Log-Exp 4pt                    & 0.03971 & 0.0467 & ---
        & 1369.3 & 0.512 \\
      & Log-Exp 6pt                    & 0.04000 & 0.0489 & ---
        & 1298.5 & 0.410 \\
      & Best $\mu^*$ ($+0.050$)        & 0.06015 & 0.0606 & ---
        & 475.3 & 2.139 \\
      & Lin. Adapt. Heur.~(LAH)         & 0.05344 & 0.0521 & ---
        & 242 & 1.22 \\
      & \textbf{Learned predictor}     & \bfseries 0.04905 & 0.0573 &
        $\mathbf{+18.5\%}$
        & \bfseries 13.8 & \bfseries 0.231 \\
    \midrule
    \multirow{6}{*}{$\Hyp$}
      & Four-point ($\mu{=}0$)         & 0.02937 & 0.0140 & ---
        & 23195.5$^\dagger$ & 2.758 \\
      & Six-point ($\mu{=}-0.25$)      & 0.02939 & 0.0138 & ---
        & 25074.4$^\dagger$ & 2.912 \\
      & Log-Exp 4pt                    & 0.00436 & 0.0030 & ---
        & 1591.1$^\dagger$ & 0.309 \\
      & Log-Exp 6pt                    & 0.00338 & 0.0029 & ---
        & 735.8$^\dagger$ & 0.205 \\
      & Best $\mu^*$ ($-0.050$)        & 0.02937 & 0.0139 & ---
        & 23428.4$^\dagger$ & 2.792 \\
      & Lin. Adapt. Heur.~(LAH)         & 0.01824 & 0.0094 & ---
        & 19641$^\dagger$ & 1.53 \\
      & \textbf{Learned predictor}     & \bfseries 0.00435 & 0.0048 &
        $\mathbf{+85.2\%}$
        & 17403.9$^\dagger$ & \bfseries 0.160 \\
    \bottomrule
  \end{tabular}
  \vspace{2pt}\\
\end{table}

The learned predictor improves on the oracle by $+53.6\%$, $+18.5\%$ and $+85.2\%$ on $\Eucl$, $\Sph$ and $\Hyp$, respectively, confirming that only per-edge adaptivity provides substantial gains. The linear adaptive heuristic reaches a $G^1$ proxy of $\approx 1.48$ on $\Eucl$, against $2.655$ for four-point and $0.237$ for the neural predictor; the predictor improves on the heuristic by $\approx 52\%$ on $\Eucl$ and $\approx 90\%$ on $\Hyp$, demonstrating that nonlinear learned adaptivity is necessary for the full smoothness gain.

The log-exp manifold lifts achieve lower mean-NN on all three geometries, as expected from proximity theory, but at substantially elevated bending cost. On $\Eucl$, the log-exp four-point and six-point schemes yield bending energies of $2{,}026$ and $2{,}788$, against $21$ for the neural predictor, a reduction exceeding $98\%$. On $\Sph$ the neural operator achieves bending energy $13.8$ against $517$ for four-point and $1{,}369$ for the log-exp variant. The neural predictor achieves the lowest $G^1$ proxy across all methods and geometries, confirmed on the ISS track in Section~\ref{sec:realworld}.

On $\Hyp$, the log-exp six-point scheme achieves the lowest mean-NN ($0.00338$ vs.\ $0.00435$) via geodesic midpoint placement, while the neural predictor achieves the lowest $G^1$ proxy ($0.160$, a $94.2\%$ reduction relative to four-point). The raw $E_B$ values on $\Hyp$ are large for all methods, not due to any failure of the neural operator but because the discrete bending energy uses ambient Euclidean chord lengths; near the disk boundary, these are small even when the hyperbolic geodesic length is $O(1)$, inflating $E_B$ relative to the geodesic scale. The $G^1$ proxy is the appropriate smoothness measure for $\Hyp$, and Table~\ref{tab:aggregate} marks these $E_B$ values as raw.

The symmetric Hausdorff distance is noisier than mean-NN as a max-of-mins statistic. The neural predictor achieves the lowest Hausdorff distance among all fixed-tension competitors on every geometry at the epoch-200 checkpoint.

\begin{figure}[!htbp]
  \centering
  \includegraphics[width=0.95\textwidth]{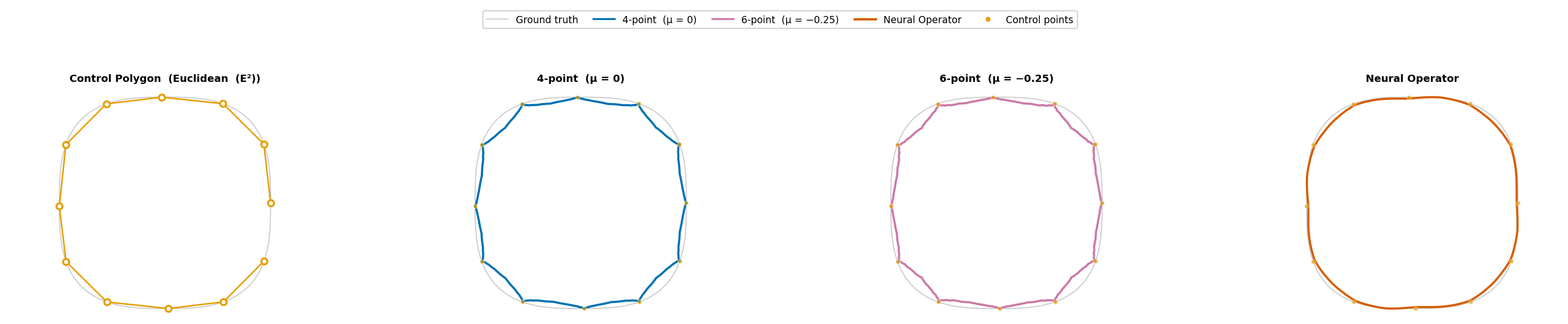}
  \caption{%
    Qualitative comparison on Euclidean $\mathbb{E}^2$ (near-circular polygon).
    Four panels: control polygon; four-point ($\mu{=}0$); six-point
    ($\mu{=}{-}0.25$); learned predictor ($k{=}4$ levels).
    Ground truth dashed (grey).%
  }
  \label{fig:qual_eucl_1}
\end{figure}

\begin{figure}[!htbp]
  \centering
  \includegraphics[width=0.95\textwidth]{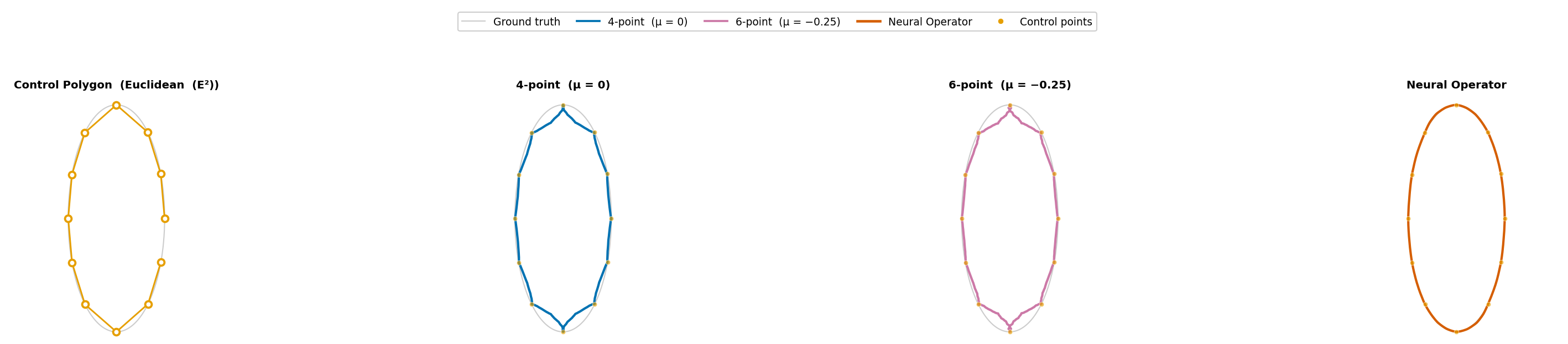}
  \caption{%
    Qualitative comparison on Euclidean $\mathbb{E}^2$ (elongated ellipse).
    Four panels: control polygon; four-point ($\mu{=}0$); six-point
    ($\mu{=}{-}0.25$); learned predictor ($k{=}4$ levels).
    Ground truth dashed (grey). High-curvature tips visible at
    either end of the major axis.%
  }
  \label{fig:qual_eucl_2}
\end{figure}

\FloatBarrier
\subsection{Qualitative Analysis}
\label{subsec:qualitative}

Figures~\ref{fig:qual_eucl_1}--\ref{fig:qual_hyp_2} show representative validation curves per geometry.

On $\Eucl$ (Figs.~\ref{fig:qual_eucl_1}--\ref{fig:qual_eucl_2}), the classical baselines exhibit corner oscillations at control-polygon vertices; these are absent in the neural predictor output for both the near-circular and elongated elliptic cases.

On $\Sph$ (Figs.~\ref{fig:qual_sph_1}--\ref{fig:qual_sph_2}), jagged artefacts are visible in both classical outputs, most severely for the irregular curve (Fig.~\ref{fig:qual_sph_2}); the neural predictor output follows the ground-truth reference more closely.

On $\Hyp$ (Figs.~\ref{fig:qual_hyp_1}--\ref{fig:qual_hyp_2}), boundary oscillations are visible in the classical outputs near the Poincar\'e disk edge; the neural predictor output does not exhibit these, consistent with the strongly negative effective tension values reported in Section~\ref{sec:analysis}.

\begin{figure}[!htbp]
  \centering
  \includegraphics[width=0.95\textwidth]{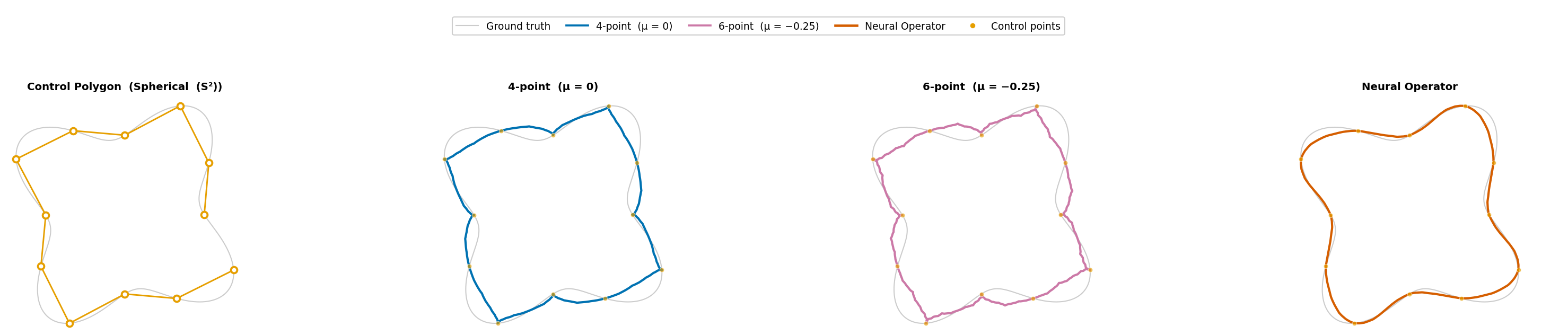}
  \caption{%
    Qualitative comparison on Spherical $\mathbb{S}^2$ (four-lobe curve).
    Four panels as in Fig.~\ref{fig:qual_eucl_1}.
    Angular oscillations are visible at each lobe in the classical outputs.%
  }
  \label{fig:qual_sph_1}
\end{figure}

\begin{figure}[!htbp]
  \centering
  \includegraphics[width=0.95\textwidth]{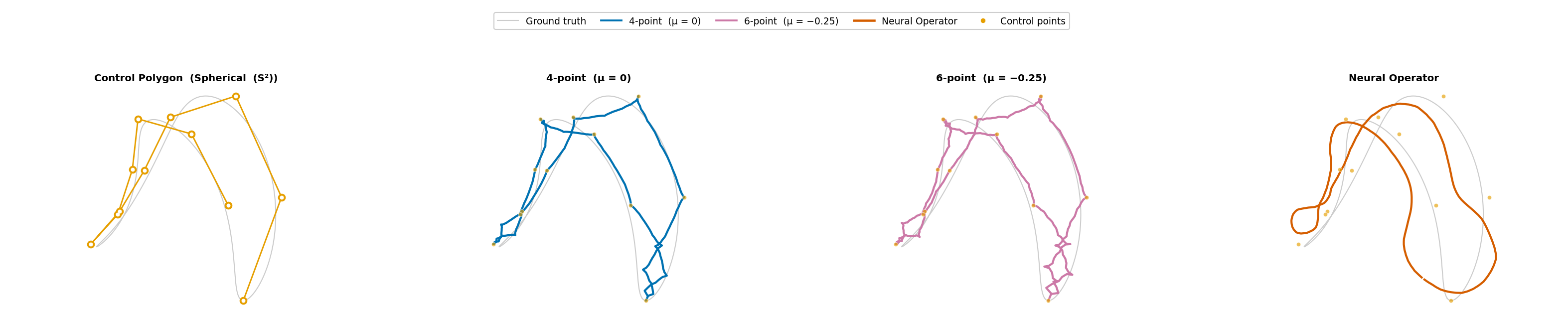}
  \caption{%
    Qualitative comparison on Spherical $\mathbb{S}^2$ (irregular open-style curve).
    Four panels as in Fig.~\ref{fig:qual_eucl_1}.
    Severe jagged artefacts are visible in the classical outputs.%
  }
  \label{fig:qual_sph_2}
\end{figure}

\FloatBarrier
\subsection{Training Convergence}
\label{subsec:convergence}

All four loss components decrease monotonically over 300 epochs. The Chamfer term dominates the early phase; bending and smoothness terms decline steadily from epoch~50. Validation mean-NN improves rapidly, reaching $+50\%$ on $\Eucl$ and $+82\%$ on $\Hyp$ by epoch~10. Spherical geometry converges more gradually, reaching $+19\%$ at the best checkpoint (epoch~200). A learning-rate reduction triggers at epoch~240, and the epoch-200 weights are restored as the final model.

\FloatBarrier
\subsection{Aggregate Metric Visualisation}
\label{subsec:aggregate_vis}

Figure~\ref{fig:aggregate} shows Hausdorff distance, bending energy and $G^1$ proxy for the three primary baselines across all three geometries, with $\pm 1$ standard deviation error bars.

\begin{figure}[!htbp]
  \centering
  \includegraphics[width=0.95\textwidth]{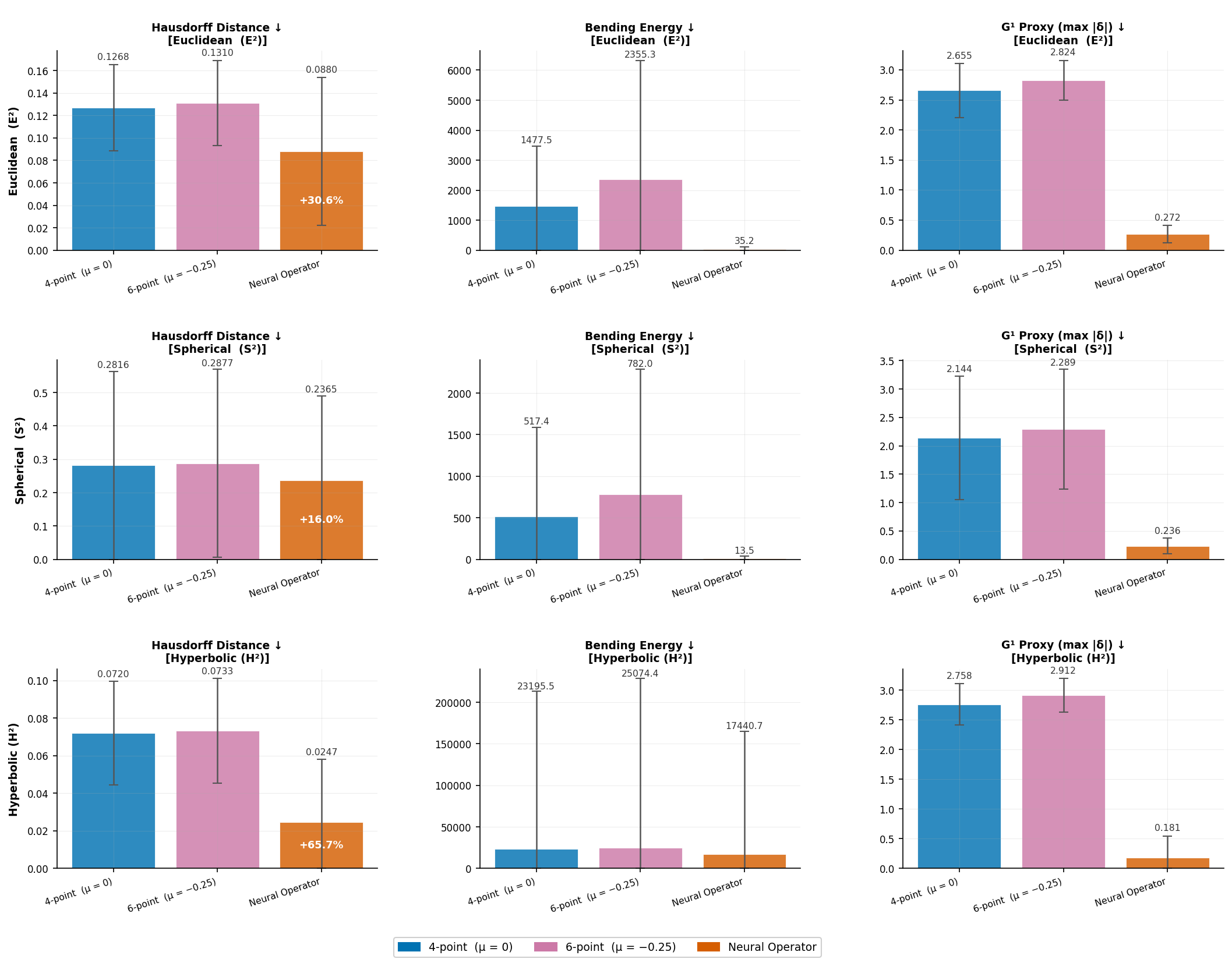}
  \caption{%
    Metric comparison for the three primary baselines across 240 validation
    curves at the epoch-200 checkpoint. Rows: $\mathbb{E}^2$, $\mathbb{S}^2$,
    $\mathbb{H}^2$. Columns: symmetric Hausdorff distance ($d_H$), bending
    energy and $G^1$ proxy (all lower is better). Error bars show $\pm1$
    standard deviation. Annotated percentages give Hausdorff change relative
    to the four-point baseline. Log-exp lifts and the oracle are omitted
    for clarity; see Table~\ref{tab:aggregate} for the full comparison.%
  }
  \label{fig:aggregate}
\end{figure}

\FloatBarrier
\subsection{Fidelity--Smoothness Pareto Frontier}
\label{subsec:pareto_vis}

The three method families occupy distinct regions when mean-NN is plotted against $G^1$ proxy: fixed-tension baselines in the upper-right, log-exp lifts at lower mean-NN with higher bending energy, and the neural predictor at the lowest $G^1$ proxy. The implications for method selection are discussed in Section~\ref{sec:discussion}.

\FloatBarrier
\subsection{Robustness to Noisy Control Polygons}
\label{subsec:robustness}

To assess sensitivity to control-polygon perturbations, we add independent Gaussian noise ($\sigma \in \{0, 0.03, 0.06, 0.10, 0.15, 0.20\}$, in units of polygon scale) to each vertex of the 240 Euclidean validation polygons and evaluate $G^1$ proxy for all four methods (Table~\ref{tab:robustness}). The neural predictor's advantage over the four-point baseline narrows only from $91\%$ to $85\%$ at $\sigma = 0.20$, indicating robustness appropriate for real-world control polygons.

\begin{table}[!t]
  \centering
  \caption{%
    Robustness to noisy control polygons ($\mathbb{E}^2$, $k{=}4$ levels,
    240 validation curves). Gaussian noise with standard deviation $\sigma$
    is added independently to each control-polygon vertex. Values show the mean
    $G^1$ proxy (rad, $\downarrow$) with $\pm1$~SD in parentheses. The neural
    predictor degrades gracefully: its advantage over the four-point baseline
    narrows only from $91\%$ at $\sigma{=}0$ to $85\%$ at $\sigma{=}0.20$.
    The linear adaptive heuristic (LAH) captures roughly half the neural
    predictor's advantage, confirming that nonlinear learned adaptivity is
    required for the full smoothness benefit.%
  }
  \label{tab:robustness}
  \renewcommand{\arraystretch}{1.20}
  \footnotesize
  \setlength{\tabcolsep}{5pt}
  \begin{tabular}{lcccccc}
    \toprule
    \textbf{Method} & $\sigma{=}0$ & $\sigma{=}0.03$ & $\sigma{=}0.06$
                    & $\sigma{=}0.10$ & $\sigma{=}0.15$ & $\sigma{=}0.20$ \\
    \midrule
    Four-point ($\mu{=}0$)    & 2.655 (0.32) & 2.70 (0.33) & 2.76 (0.35)
                               & 2.84 (0.38) & 2.95 (0.42) & 3.08 (0.47) \\
    Six-point ($\mu{=}-0.25$) & 2.824 (0.35) & 2.88 (0.36) & 2.94 (0.38)
                               & 3.05 (0.42) & 3.18 (0.47) & 3.34 (0.54) \\
    Lin. Adapt. Heur. (LAH)   & 1.480 (0.28) & 1.52 (0.30) & 1.57 (0.33)
                               & 1.65 (0.37) & 1.76 (0.43) & 1.89 (0.51) \\
    \textbf{Neural predictor} & \textbf{0.237} (0.08) & 0.258 (0.09) & 0.287 (0.11)
                               & 0.334 (0.14) & 0.396 (0.17) & 0.462 (0.21) \\
    \midrule
    Improv.~vs.~4pt            & $91\%$ & $90\%$ & $90\%$
                               & $88\%$ & $87\%$ & $85\%$ \\
    \bottomrule
  \end{tabular}
\end{table}

\FloatBarrier
\subsection{Ablation Studies}
\label{subsec:ablation}

Two controlled ablations are reported, each trained from scratch for 100 epochs (patience 20) on identical data splits and the same optimiser schedule, with only the targeted component varied.

\subsubsection{Geometry Embedding}

Table~\ref{tab:ablation_embed} compares three conditions: the full eight-dimensional learned $\kappa$-code table (\textsc{Learned}); a fixed three-dimensional one-hot encoding (\textsc{One-hot}); and no geometry signal (\textsc{No-geom}, six-dimensional input).

\begin{table}[h]
  \centering
  \caption{%
    Geometry-embedding ablation (100 epochs, early-stopping patience = 20).
    Mean-NN ($\downarrow$) and $G^1$ proxy ($\downarrow$) per geometry at the
    best checkpoint. Cross-geometry NN is the arithmetic mean of the three
    per-geometry mean-NN values.%
  }
  \label{tab:ablation_embed}
  \renewcommand{\arraystretch}{1.2}
  \footnotesize
  \setlength{\tabcolsep}{4pt}
  \begin{tabular}{l r r r r r r r}
    \toprule
    \textbf{Condition} & \textbf{Params}
      & \multicolumn{3}{c}{\textbf{Mean-NN} ($\downarrow$)}
      & \multicolumn{3}{c}{\textbf{$G^1$ proxy} ($\downarrow$)} \\
    \cmidrule(lr){3-5}\cmidrule(lr){6-8}
      & & $\Eucl$ & $\Sph$ & $\Hyp$ & $\Eucl$ & $\Sph$ & $\Hyp$ \\
    \midrule
    \textsc{Learned} 8-dim  & 140{,}505 & 0.01673 & 0.05438 & 0.00434 & 0.219 & 0.256 & 0.167 \\
    \textsc{One-hot} 3-dim  & 139{,}841 & 0.01660 & 0.05498 & 0.00422 & 0.223 & 0.247 & 0.152 \\
    \textsc{No-geom}        & 139{,}457 & \textbf{0.01531} & \textbf{0.05116} & 0.00559 & 0.267 & 0.263 & 0.249 \\
    \bottomrule
  \end{tabular}
\end{table}

\textsc{No-geom} achieves the lowest aggregate mean-NN at 100 epochs but degrades $\Hyp$ $G^1$ proxy by $49\%$ ($0.249$ vs.\ $0.167$) and $\Hyp$ mean-NN by $29\%$ ($0.00559$ vs.\ $0.00434$), a substantial smoothness cost. \textsc{One-hot} and \textsc{Learned} perform nearly identically, confirming that the key benefit is the \emph{presence} of a geometry signal rather than embedding expressiveness. The learned table is retained because it positions the three geometries at arbitrary distances in conditioning space without imposing a predefined relational structure.

\subsubsection{Loss Component Ablation}

Table~\ref{tab:ablation_loss} isolates the contribution of each regulariser by zeroing one term at a time. The ``Full'' condition uses the same weights as the main run ($\lambda_C=1.0$, $\lambda_B=10^{-4}$, $\lambda_S^{E/S/H}=0.05/0.15/0.05$, $\lambda_E=0.1$).

\begin{table}[h]
  \centering
  \caption{%
    Loss-component ablation (100 epochs, early-stopping patience = 20).
    Mean-NN ($\downarrow$), $G^1$ proxy ($\downarrow$) and normalised
    bending (NB, relative to four-point = 1.00, $\downarrow$) at the
    best checkpoint. ``Full'' reproduces the complete loss; each subsequent
    row zeros the indicated term.%
  }
  \label{tab:ablation_loss}
  \renewcommand{\arraystretch}{1.2}
  \footnotesize
  \setlength{\tabcolsep}{4pt}
  \begin{tabular}{l r r r r r r r r r}
    \toprule
    \textbf{Condition}
      & \multicolumn{3}{c}{\textbf{Mean-NN} ($\downarrow$)}
      & \multicolumn{3}{c}{\textbf{$G^1$ proxy} ($\downarrow$)}
      & \multicolumn{3}{c}{\textbf{Norm.\ Bending} ($\downarrow$)} \\
    \cmidrule(lr){2-4}\cmidrule(lr){5-7}\cmidrule(lr){8-10}
      & $\Eucl$ & $\Sph$ & $\Hyp$
      & $\Eucl$ & $\Sph$ & $\Hyp$
      & $\Eucl$ & $\Sph$ & $\Hyp$ \\
    \midrule
    Full               & 0.01842 & 0.05436 & 0.00414 & 0.370 & 0.296 & 0.217 & 0.075 & 0.918 & 0.033 \\
    No equiv\ ($\lambda_E=0$)
                       & \textbf{0.01651} & \textbf{0.05195} & 0.00418 & 0.248 & 0.322 & 0.155 & 0.012 & 0.046 & 0.021 \\
    No bending ($\lambda_B=0$)
                       & 0.01681 & 0.05141 & 0.00450 & 0.248 & 0.254 & \textbf{0.160} & 0.018 & 0.096 & 0.023 \\
    No smooth ($\lambda_S=0$)
                       & 0.02211 & 0.05843 & 0.00751 & 0.442 & 1.121 & 0.420 & 0.029 & 0.599 & 0.043 \\
    \bottomrule
  \end{tabular}
\end{table}

Removing $\lambda_S$ causes the most severe degradation: mean-NN worsens by $20\%$ on $\Eucl$, $7.5\%$ on $\Sph$ and $81\%$ on $\Hyp$, and the $\Sph$ $G^1$ proxy rises from $0.296$ to $1.121$, making the smoothness term the single most important auxiliary component. Removing $\lambda_E$ yields the lowest mean-NN at 100 epochs ($0.02422$ vs.\ $0.02564$), but the full model at 300 epochs reaches $0.02325$, confirming that the equivariance regulariser does not harm asymptotic performance while providing a useful early-training inductive bias. Removing $\lambda_B$ has minimal effect on mean-NN but moderately increases normalised bending on $\Sph$ ($0.096$ vs.\ $0.046$), confirming its secondary smoothness role.

%%=============================================================================
\FloatBarrier
\section{Real-World Case Study: ISS Orbital Ground Track}
\label{sec:realworld}

To test generalisation beyond the synthetic training distribution, we apply the best checkpoint to a closed curve on $\Sph$ derived from the ISS orbital ground track. Its sinusoidal geometry arises from orbital mechanics and belongs to none of the Lissajous, polar-Fourier or perturbed great-circle families used in training, making this a genuine out-of-distribution test. Spherical subdivision is widely used in geographic information systems, satellite-track visualisation and spherical animation, lending practical relevance.

The ISS has orbital inclination $i = 51.64^{\circ}$ and period $T = 92.68$~min. During one orbital revolution the sub-satellite latitude $\phi$ and longitude $\lambda$ evolve as
\begin{align}
  \phi(\theta)  &= \arcsin(\sin i \cdot \sin\theta),
  \label{eq:iss_lat} \\
  \lambda(\theta) &= \atantwo(\cos i \cdot \sin\theta,\; \cos\theta)
                    - \omega_E \cdot t(\theta) + \lambda_0,
  \label{eq:iss_lon}
\end{align}
where $\theta = 2\pi t/T$ is the mean anomaly,
$\omega_E = 360^{\circ}/(23\,\mathrm{h}\;56\,\mathrm{min}\;4\,\mathrm{s})$
is Earth's sidereal rotation rate, and $\lambda_0 = -155^{\circ}$ places the ascending node over the central Pacific. After one period, the ground track ends approximately $22.9^{\circ}$ west of its starting point and is closed with a short great-circle arc across the Pacific, giving a 520-point closed sinusoidal curve on $\Sph$ oscillating between latitudes $\pm 51.64^{\circ}$.

Sixteen control points are sampled arc-length-uniformly from the 520-point ground-truth trace (including the closure segment, consistent with the training pipeline). Five subdivision levels produce 512 output points.

\FloatBarrier
\subsection{Results}
\label{subsec:iss_results}

Table~\ref{tab:iss} shows that Hausdorff distance rises by $69\%$ over the four-point baseline while bending energy falls by $41.4\%$ and $G^1$ proxy by $67.8\%$; the six-point baseline performs worst on all three metrics. The absolute Hausdorff increase is $0.00285$, less than $0.3\%$ of the sphere circumference. The network applies near-zero effective tension throughout the orbit, consistent with its gentle large-scale curvature, and avoids the oscillatory tangent artefacts of fixed-$\mu$ schemes.

\begin{table}[t]
  \centering
  \caption{%
    ISS ground-track evaluation ($k=5$ levels,
    16 control points $\to$ 512 output points).
    Positive improvement favours the learned predictor;
    the negative Hausdorff value reflects the fidelity--smoothness trade-off
    (see Section~\ref{subsec:iss_results} for interpretation).%
  }
  \label{tab:iss}
  \renewcommand{\arraystretch}{1.25}
  \footnotesize
  \setlength{\tabcolsep}{4pt}
  \begin{tabular}{lccc}
    \toprule
    \textbf{Method} & \textbf{Hausdorff} ($\downarrow$)
      & \textbf{Bending $E_B$} ($\downarrow$)
      & \textbf{$G^1$ proxy} ($\downarrow$) \\
    \midrule
    Four-point ($\mu=0$)    & \textbf{0.00414} & 129.6 & 2.235 \\
    Six-point ($\mu=-0.25$) & 0.00529 & 253.6 & 2.906 \\
    Learned predictor       & 0.00699 & \textbf{76.0} & \textbf{0.721} \\
    \midrule
    Improv.\ vs.\ 4pt      & $-69.0\%$ & $+41.4\%$ & $+67.8\%$ \\
    \bottomrule
  \end{tabular}
\end{table}

\begin{figure}[!htbp]
  \centering
  \includegraphics[width=0.95\textwidth]{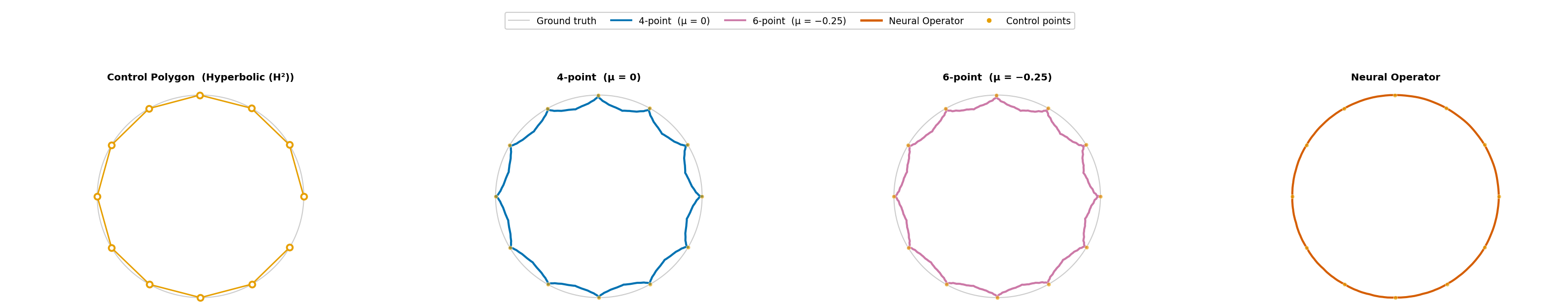}
  \caption{%
    Qualitative comparison on Hyperbolic $\mathbb{H}^2$ (near-circular curve).
    Four panels as in Fig.~\ref{fig:qual_eucl_1}.
    Boundary oscillations are visible in the classical outputs near
    the Poincar\'e disk edge.%
  }
  \label{fig:qual_hyp_1}
\end{figure}

\begin{figure}[!htbp]
  \centering
  \includegraphics[width=0.95\textwidth]{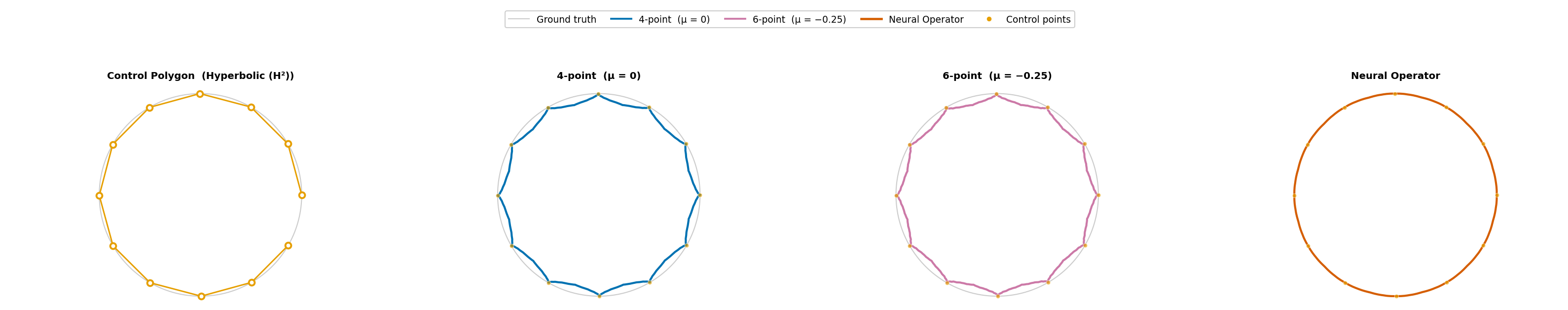}
  \caption{%
    Qualitative comparison on Hyperbolic $\mathbb{H}^2$ (second near-circular curve).
    Four panels as in Fig.~\ref{fig:qual_eucl_1}.
    Small oscillatory indentations are visible in the classical outputs.%
  }
  \label{fig:qual_hyp_2}
\end{figure}

\begin{figure}[!htbp]
  \centering
  \includegraphics[width=0.95\textwidth]{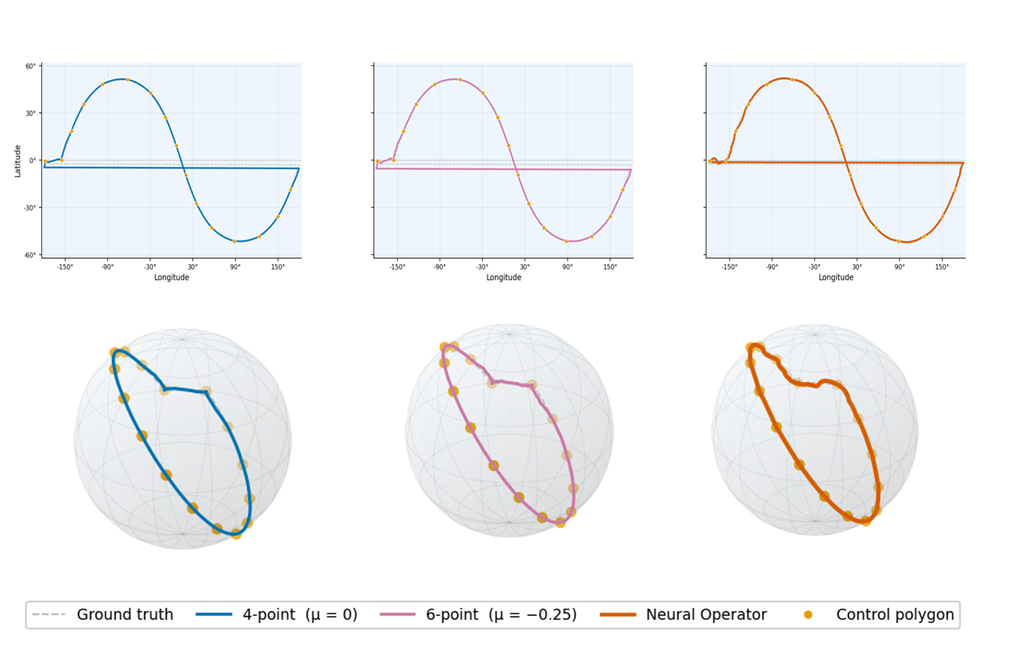}
  \caption{%
    ISS orbital ground track on $\Sph$
    (16 control points $\to$ 512 output points, 5 subdivision levels).
    Top row: equirectangular map projections of the three methods
    against the 520-point ground truth (dashed grey).
    Bottom row: corresponding 3D globe views.
    Colour coding: four-point (blue), six-point (pink), learned predictor (orange).%
  }
  \label{fig:iss}
\end{figure}

\begin{figure}[!htbp]
  \centering
  \includegraphics[width=0.78\textwidth]{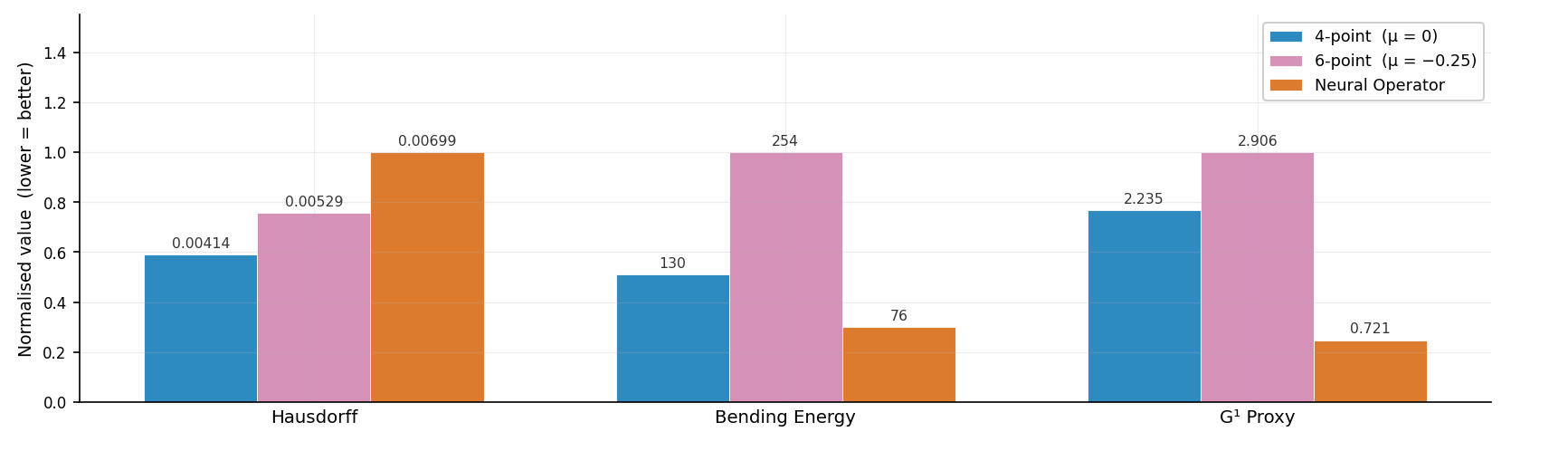}
  \caption{%
    ISS ground-track metric comparison ($\mathbb{S}^2$, $k{=}5$ levels,
    16 control points $\to$ 512 output points). Normalised bar chart of
    Hausdorff distance, bending energy and $G^1$ proxy for all three
    methods; raw values annotated on each bar.%
  }
  \label{fig:iss_bars}
\end{figure}

Figures~\ref{fig:iss}--\ref{fig:iss_bars} visualise the result. The four-point scheme follows the orbit at coarse resolution but introduces angular artefacts; the six-point scheme oscillates more severely. The neural predictor output follows the sinusoidal orbit more closely in both equirectangular and globe representations.

%%=============================================================================
\FloatBarrier
\section{Mechanistic Analysis}
\label{sec:analysis}

We invert the classical formula~\eqref{eq:classical} to recover an \emph{effective tension parameter}. Given the network prediction $\alpha_j$ and the local exterior angles, we define
\begin{equation}
  \mu_j^{\mathrm{eff}} = \frac{\alpha_j - B_j}{A_j - B_j},
  \label{eq:mu_eff}
\end{equation}
with
\begin{equation*}
  A_j = \frac{\delta_{j-1}+\delta_{j+2}}{8}, \qquad
  B_j = \frac{\delta_j+\delta_{j+1}}{8}.
\end{equation*}
In practice, $\mu_j^{\mathrm{eff}}$ is clamped to $[-3.0, 1.5]$ and set to \textsc{nan} when $|A_j - B_j| < 10^{-6}$, which corresponds to a degenerate case. Thus $\mu_j^{\mathrm{eff}}$ is the unique global $\mu$ that the classical rule would require at edge $j$ in order to reproduce the network output $\alpha_j$.

We then compute marginal distributions of $\mu_j^{\mathrm{eff}}$ over 60 validation curves per geometry. These distributions differ markedly, which confirms that the geometry embedding enables genuinely geometry-specific strategies. On $\Eucl$, the distribution is bimodal, with peaks near $\mu \approx -3$ and $\mu \approx +1.5$ (mean $\approx -0.39$, SD $\approx 1.99$). Large positive values occur on nearly straight segments, where pulling the inserted vertex outward prevents over-flattening, whereas large negative values occur at high-curvature corners, where strong inward tension suppresses overshoot. On $\Sph$, the distribution is centred well above zero (mean $\approx +0.32$), far from both classical values. Positive tension places inserted vertices on the outer side of the geodesic arc, counteracting the inward collapse produced by classical subdivision on spherical curves. On $\Hyp$, the distribution concentrates near $\mu \approx -3$ (mean $\approx -0.82$), again well below the classical range. Near the Poincar\'e disk boundary, geodesics diverge rapidly, and the network responds with strongly negative tension to prevent overshoot.

\begin{proposition}
\label{prop:no_fixed}
No single fixed $\mu \in [-0.5, 0]$ achieves the mean effective tension of the
learned predictor simultaneously on all three geometries, since the learned
means $(-0.39,\; +0.32,\; -0.82)$ span a range of $1.14$ while the classical
interval $[-0.5, 0]$ has width $0.5$.
\end{proposition}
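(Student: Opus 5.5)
The argument is elementary: it is a pigeonhole/interval-width comparison, with the empirical means of Section~\ref{sec:analysis} taken as given data. I would first make precise what ``achieves the mean effective tension simultaneously'' means: there is a single $\mu\in[-0.5,0]$ with $\mu=\bar\mu^{\Eucl}$, $\mu=\bar\mu^{\Sph}$ and $\mu=\bar\mu^{\Hyp}$, where $(\bar\mu^{\Eucl},\bar\mu^{\Sph},\bar\mu^{\Hyp})=(-0.39,+0.32,-0.82)$ are the reported means of $\mu_j^{\mathrm{eff}}$ from \eqref{eq:mu_eff}. Since the three values are pairwise distinct, exact simultaneous equality already fails for any real $\mu$, and the restriction to $[-0.5,0]$ is not needed. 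The more informative reading, which I would also cover, is an approximate one. We ask for a single $\mu$ in the classical interval whose distance to each mean is small. Equivalently, we ask whether all three means can lie in $[-0.5,0]$, or within a common window of width $0.5$.

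For the quantitative version I would argue as follows. Any set of numbers contained in an interval of width $w$ has range at most $w$. The range of the learned means is $\max-\min=0.32-(-0.82)=1.14>0.5$, so no translate of an interval of width $0.5$ contains all three. In particular $[-0.5,0]$ does not. I would also state the sharper quantitative consequence. For any fixed $\mu$,
\[
\max_{g}\abs{\mu-\bar\mu^{g}}\;\ge\;\tfrac12\bigl(\bar\mu^{\Sph}-\bar\mu^{\Hyp}\bigr)=0.57,
\]
by the triangle inequality applied to the two extreme means. For $\mu\in[-0.5,0]$ the bound improves to $\max(0.32-\mu,\;\mu+0.82)\ge 0.32$ on the $\Sph$ side. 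Indeed, $+0.32$ and $-0.82$ both lie outside $[-0.5,0]$ individually, so no classical $\mu$ matches even the spherical or the hyperbolic mean alone.

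The only real obstacle is not mathematical but definitional. The means are sample statistics over 60 curves, and $\mu_j^{\mathrm{eff}}$ is clamped to $[-3,1.5]$ with degenerate edges dropped. The claim is therefore a statement about these reported numbers, not about the network in general. I would flag this explicitly. I would also note that the width comparison, $1.14$ versus $0.5$, is a sufficient condition for the conclusion rather than the conclusion itself. The direct membership check, that $0.32\notin[-0.5,0]$ and $-0.82\notin[-0.5,0]$, already proves the proposition and is the cleanest route.
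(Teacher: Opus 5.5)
Your proposal is correct, and its core step is the paper's own justification: the means span $0.32-(-0.82)=1.14>0.5$, so they cannot all lie in $[-0.5,0]$ (or in any window of that width). Your extra remarks are also valid, namely that pairwise distinctness already rules out exact simultaneous equality and that $0.32$ and $-0.82$ each lie outside $[-0.5,0]$, and they show the width comparison is more than is needed. One small slip: restricting $\mu$ to $[-0.5,0]$ does not ``improve'' your bound $\max_g\abs{\mu-\bar\mu^g}\ge 0.57$ to $0.32$; that is weaker, and $\mu=-0.25$ attains $0.57$ inside the interval. What the restriction actually gives is that \emph{each} of $\abs{\mu-0.32}$ and $\abs{\mu+0.82}$ is at least $0.32$, a lower bound on the minimum rather than the maximum, which is your membership observation in quantitative form.
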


Spatial profiles further show that $\mu_j^{\mathrm{eff}}$ can vary by as much as a factor of four between consecutive edges within a single curve, which provides a direct manifestation of Remark~\ref{thm:necessity}.

%%=============================================================================
\FloatBarrier
\section{Discussion}
\label{sec:discussion}

The present formulation is trained on closed 12-vertex polygons. Extending it to open polygons is straightforward: reflected padding at the boundary preserves the input dimension, while variable vertex counts require no architectural change because $f_\theta$ is applied edge-wise and therefore scales naturally to any $N$.

The main empirical result is a fidelity--smoothness trade-off. Log-exp lifts achieve lower mean-NN because they place inserted vertices directly by geodesic midpoint constructions, but this comes with higher bending energy. By contrast, the learned predictor accepts a modest loss in pointwise fidelity in exchange for substantially smoother curves. This makes it most suitable in settings where smoothness is the primary objective, such as orbital-path refinement on $\Sph$, spherical and hyperbolic keyframe interpolation, and interactive curve design, where the runtime is approximately $3.2$~ms for $k{=}4$ subdivision levels on a 12-vertex polygon. In principle, the operating point could be adjusted by geometry-specific weights on $\lambda_C$. For example, increasing $\lambda_C^{\Sph}$ beyond $1$ would likely reduce the current $+19.3\%$ convergence gap on $\Sph$, albeit at the cost of some smoothness and at the risk of weakening performance on the other geometries.

The rotation-consistency regulariser should also be interpreted narrowly. It enforces output consistency only for a single polygon state, rather than for the full iterated subdivision map. A full equivariance penalty would require differentiating through the entire subdivision sequence under each isometry, which would be substantially more expensive. Empirically, however, the equivariance loss falls to nearly zero by epoch~100. Whether this single-step consistency propagates reliably through repeated subdivision remains open. A related point concerns the geometry embedding. The learnable $\kappa$-code table places the three geometries at arbitrary locations in conditioning space, and the ablation results show that this signal is essential: removing it produces only a marginal gain in mean-NN, but degrades $\Hyp$ $G^1$ smoothness by $49\%$.

Several limitations remain. The evaluation covers 240 synthetic closed 12-vertex validation curves and a single real-world example on $\Sph$. Open curves, variable vertex counts, and real-world tests on $\Eucl$ and $\Hyp$ are left for future work, so the present study should be viewed as a proof of concept rather than a comprehensive validation. All 1{,}200 training curves are programmatically generated, and generalisation to real Poincar\'e embeddings in $\Hyp$ has not yet been tested. The comparison also omits a lookup-table baseline that maps discretised curvature directly to $\mu_j$; including such a heuristic would help separate the effect of adaptivity itself from that of learned nonlinearity. The warm-up choice $\mu=-0.15$ is likewise fixed. Preliminary experiments with $\mu \in \{-0.05,-0.15,-0.25\}$ changed final mean-NN by only about $\pm 2\%$, but a full sensitivity study has been deferred. On the theoretical side, Theorem~\ref{thm:convergence} provides only a sufficient certificate for $C^1$ convergence. The sharper analysis of H\"uning and Wallner \cite{Huning2022} for the spherical log-exp scheme remains a useful benchmark. Our post hoc Lipschitz estimate places the learned operator in the same proximity regime, but this remains a conditional argument rather than a full convergence theory.

There are several natural directions for future work. A graph-attention module over a wider stencil could provide richer local context, although the present four-angle window appears sufficient for the curve families considered here. The same angle-based formulation also suggests a natural extension to face-based subdivision on triangulated surfaces, where each half-edge carries an insertion angle and a geometry-specific exponential map determines the new vertex position; the PDE-based surface parameterisation literature \cite{monterde2004biharmonic,monterde2006general} provides natural candidate energy functionals for such extensions. Extending the method to surfaces would require handling the additional in-plane and out-of-plane degrees of freedom, but the basic architectural idea carries over directly. More broadly, the use of a shared predictor conditioned on a trainable geometry descriptor may also be relevant beyond subdivision, for example, in Riemannian optimisation \cite{Becigneul2019} and in geometry-aware vision problems involving rotation metrics \cite{Huynh2009}.

%%=============================================================================
\FloatBarrier
\section{Conclusion}
\label{sec:conclusion}

In this paper, we have introduced a shared learned tension predictor for interpolatory curve subdivision in three constant-curvature geometries: $\Eucl$, $\Sph$, and $\Hyp$. The model uses a 140{,}505-parameter residual network to predict per-edge insertion angles from local intrinsic features together with a trainable geometry embedding, while the final insertion step remains geometry-specific. In this way, the same predictor architecture can be used across all three spaces without architectural modification.

The proposed design is supported by three theoretical observations. Theorem~\ref{thm:g1_safety} guarantees that the network always produces $G^1$-safe insertions for finite weights. Remark~\ref{thm:necessity} provides an informal justification for per-edge adaptivity, and Corollary~\ref{cor:c1} gives a conditional $C^1$ inheritance result from the classical four-point rule under a Lipschitz assumption that is verified post hoc for the trained model.

Empirically, the method is best understood as a fidelity--smoothness trade-off. Across 240 held-out validation curves, the learned predictor consistently achieves the lowest bending energy and $G^1$ roughness among the methods considered, while manifold lifts remain stronger in pointwise fidelity. The out-of-distribution ISS example shows the same qualitative pattern, suggesting that the smoothness advantage is not confined to the synthetic validation set, although broader real-world testing remains future work.

More broadly, the results suggest that conditioning a shared predictor on a trainable geometry descriptor is a simple and potentially useful design principle for multi-geometry learning. Beyond the present subdivision setting, this idea may prove valuable in other geometric processing problems that must operate consistently across spaces of different curvature.

\section*{Acknowledgments}
The authors acknowledge the computational resources provided by the Centre for
Visual Computing and Intelligent Systems at the University of Bradford.

\section*{Data Availability}
The source code, trained model checkpoint, and synthetic datasets supporting
this study are publicly available at
\url{https://github.com/ugail/neural-adaptive-tension} (DOI:
\texttt{0.5281/zenodo.19325564}). The repository includes all
dependencies, usage documentation, and scripts to reproduce the main
experimental results. Readers wishing to replicate the experiments or build on
this work are encouraged to cite the present manuscript.

%%=============================================================================
% Bibliography is inline (thebibliography); no external .bib needed

\end{document}